\newtheorem{lemma}{Lemma}
\newtheorem{corollary}{Corollary}
\newtheorem{theorem}{Theorem}
\newtheorem{proposition}{Proposition}
\newtheorem{remark}{Remark}
\theoremstyle{definition}
\newtheorem{definition}{Definition}
\def\one{\bm{1}}
\newcommand{\arcsinh}{\textnormal{arcsinh}}
\newcommand{\argmax}{\textnormal{argmax}}
\newcommand{\Blue}[1]{{\color{blue}  {#1}}}
\newcommand{\dom}{\mathcal{D}}
\newcommand{\R}{\mathbb{R}}
\newcommand{\EE}{\mathbb{E}}
\newcommand{\LL}{\mathrm{L}}
\newcommand{\ah}{{\hat{\bm{a}}}}
\newcommand{\ub}{\bm{u}}
\newcommand{\vb}{\bm{v}}
\newcommand{\lb}{\ell}
\def\Fs{\check{F}}
\def\fs{\check{f}}
\newcommand{\Fsub}{\Fs^*}
\def\Ft{\widetilde{F}}
\def\ft{\tilde{f}}
\newcommand{\w}{\bm{w}}
\newcommand{\wt}{\tilde{\w}}
\newcommand{\x}{\bm{x}}
\newcommand{\z}{\bm{z}}
\renewcommand{\a}{\bm{a}}
\newcommand{\y}{\bm{y}}
\newcommand{\yh}{\hat{\bm{y}}}
\newcommand{\Puk}{P_{\text{\tiny UK}}}
\newcommand{\Puksub}{P_{\text{\scalebox{0.9}{UK}}}}
\newcommand{\Dt}{D_{\text{\tiny Tsallis}}}
\newcommand{\loss}{L_{t_1}^{t_2}}
\newcommand{\tf}{s}
\newcommand{\Set}{\mathcal{S}}
\newcommand{\nclass}{\ensuremath k}
\title{Robust Bi-Tempered Logistic Loss \\ Based on Bregman Divergences}
\author{%
  Ehsan Amid$^{\,\star\dagger}$
  \quad
  Manfred K.\ Warmuth$^{\,\star\dagger}$
  \quad
  Rohan Anil$^{\,\dagger}$
  \quad
  Tomer Koren$^{\,\dagger}$\\\\
  $\star$\,Department of Computer Science, University of California, Santa Cruz\\
  $\dagger$\,Google Brain\\
  \texttt{\{eamid, manfred, rohananil, tkoren\}@google.com}
}
\begin{document}
\maketitle

\vspace{-0.1cm}
\begin{abstract}
We introduce a temperature into the exponential function and replace the softmax output layer of neural nets by a high temperature generalization. Similarly, the logarithm in the log loss we use for training is replaced by a low temperature logarithm. By tuning the two temperatures we create loss functions that are non-convex already in the single layer case. When replacing the last layer of the neural nets by our bi-temperature generalization of logistic loss, the training becomes more robust to noise. We visualize the effect of tuning the two temperatures in a simple setting and show the efficacy of our method on large data sets. Our methodology is based on Bregman divergences and is superior to a related two-temperature method using the Tsallis divergence. 
\end{abstract}

\vspace{-0.4cm}
\section{Introduction} \label{sec:intro}

The logistic loss, also known as the softmax loss, has been the standard choice in training deep neural networks for classification. The loss involves the application of the softmax function on the activations of the last layer to form the class probabilities followed by the relative entropy (aka the Kullback-Leibler (KL) divergence) between the true labels and the predicted probabilities. The logistic loss is known to be a convex function of the activations (and consequently, the weights) of the last layer. 

Although desirable from an optimization standpoint, convex
losses have been shown to be prone to outliers~\cite{long}
as the loss of each individual example unboundedly increases as a function of the activations. 
These outliers may correspond to extreme examples that lead to large gradients, or misclassified training examples that are located far away from the classification boundary. 
Requiring a convex loss function at the output layer thus
seems somewhat arbitrary, in particular since convexity in
the last layer's activations does not guarantee convexity
with respect to the parameters of the network outside the last layer.
Another issue arises due to the exponentially decaying tail of the softmax function that assigns probabilities to the classes. 
In the presence of mislabeled training examples near the classification boundary, the short tail of the softmax probabilities enforces the classifier to closely follow the noisy training examples.
In contrast, heavy-tailed alternatives for the softmax probabilities have been shown to significantly improve the robustness of the loss to these examples~\cite{tlogistic}. 

The logistic loss is essentially the logarithm of the predicted class probabilities, which are computed as the normalized exponentials of the inputs.
In this paper, we tackle both shortcomings of the logistic loss, pertaining to its convexity as well as its tail-lightness, by replacing the logarithm and exponential functions with their corresponding ``tempered'' versions. 
We define the function $\log_t: \R_+ \rightarrow \R$ 
with \emph{temperature} parameter $t \geq 0$ as in~\cite{texp1}:
\begin{equation}
\label{eq:logt}
\log_t(x) \coloneqq \frac{1}{1-t} (x^{1-t} - 1)\, .
\end{equation}
The $\log_t$ function is monotonically increasing 
and concave. The standard (natural) logarithm is recovered at the limit $t \rightarrow 1$. 
Unlike the standard $\log$, the $\log_t$ function is bounded from below by $-1/(1-t)$ for $0 \leq t < 1$. 
This property will be used to define bounded loss functions
that are significantly more robust to outliers. 
Similarly, our heavy-tailed alternative for the softmax function is based on the tempered exponential function. 
The function $\exp_t: \R \rightarrow \R_+$ with temperature $t \in \R$ is defined as the inverse\footnote{When $0 \leq t < 1$, the domain of $\exp_t$ needs to be restricted to $-1/(1-t) \leq x$ for the inverse property to hold.} of $\log_t$, that is,
\begin{equation} \label{eq:expt}
\exp_t(x) \coloneqq [1 + (1-t)\,x]_+^{1/(1-t)}\,\, ,
\end{equation}
where $[\,\cdot\,]_+ = \max\{\,\cdot\,, 0\}$. 
The standard $\exp$ function is again recovered at the limit $t\rightarrow 1$. 
Compared to the $\exp$ function, a heavier tail (for negative values of $x$) is achieved for $t>1$. We use this property to define heavy-tailed analogues of softmax probabilities at the output layer.

\begin{figure*}[t!]
\vspace{-1cm}
\begin{center}
\subfloat[]{\includegraphics[height=0.24\textwidth]{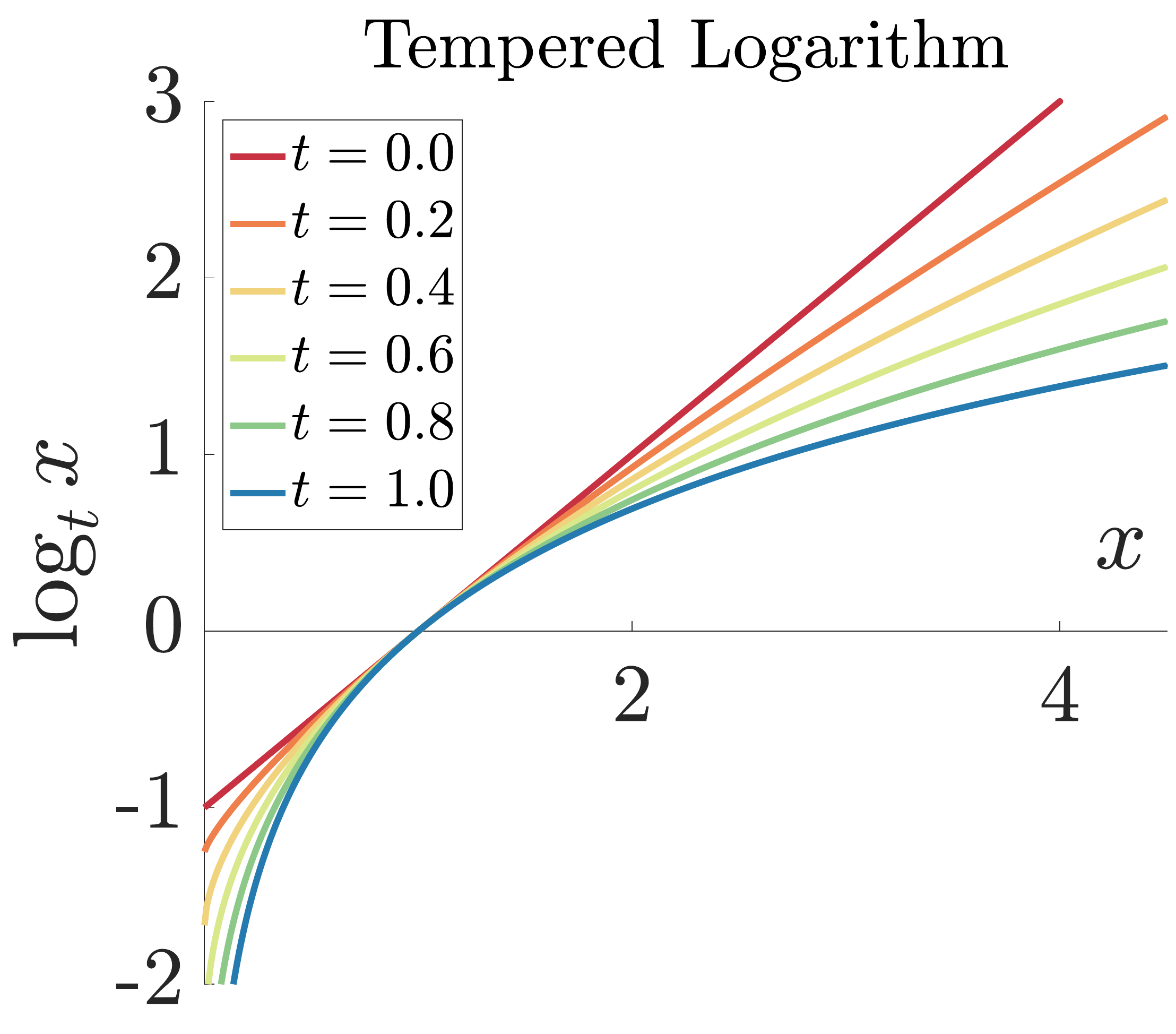}\label{fig:logt}}
\subfloat[]{\includegraphics[height=0.24\textwidth]{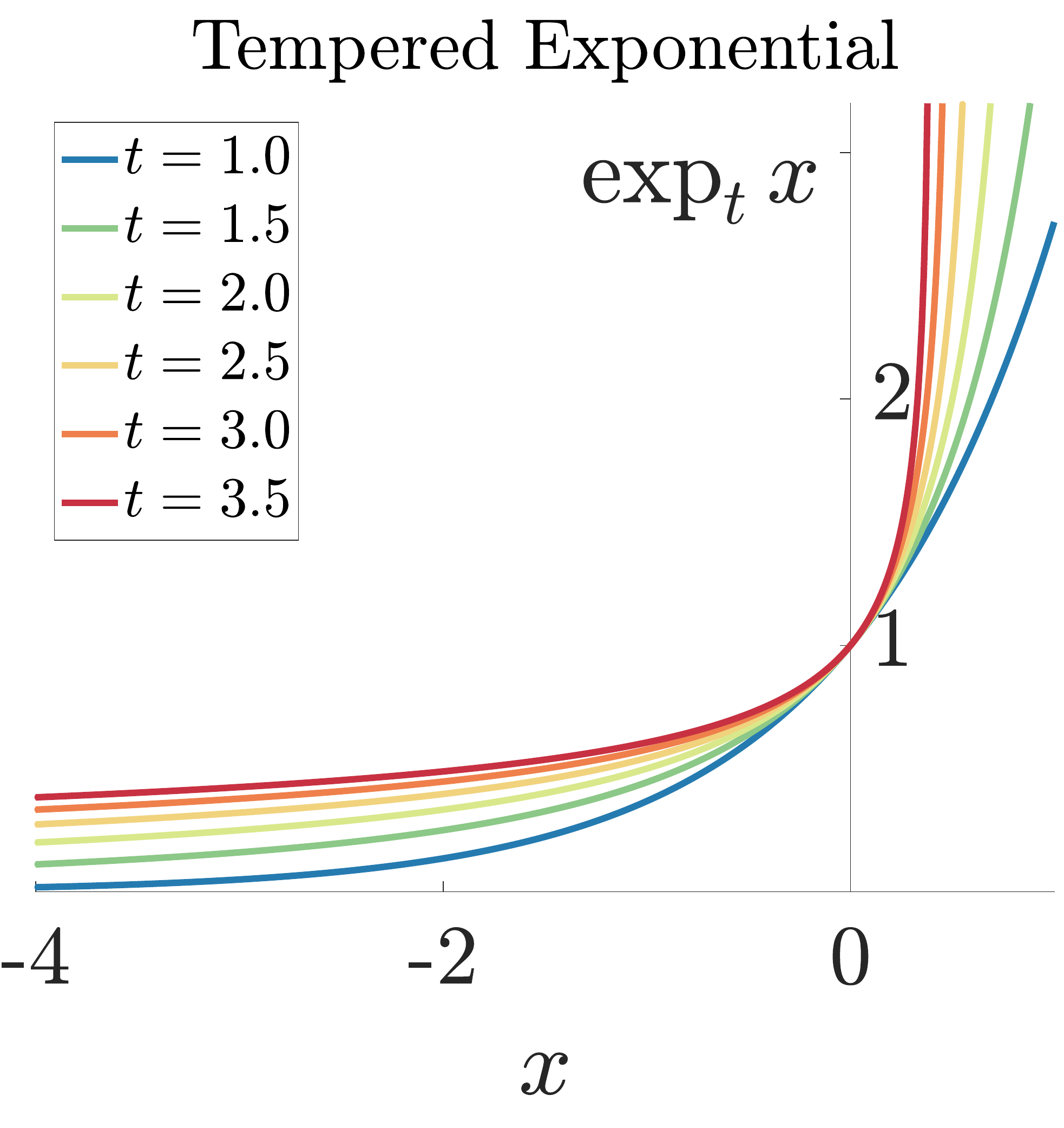}\label{fig:expt}}
\subfloat[]{\includegraphics[height=0.24\textwidth]{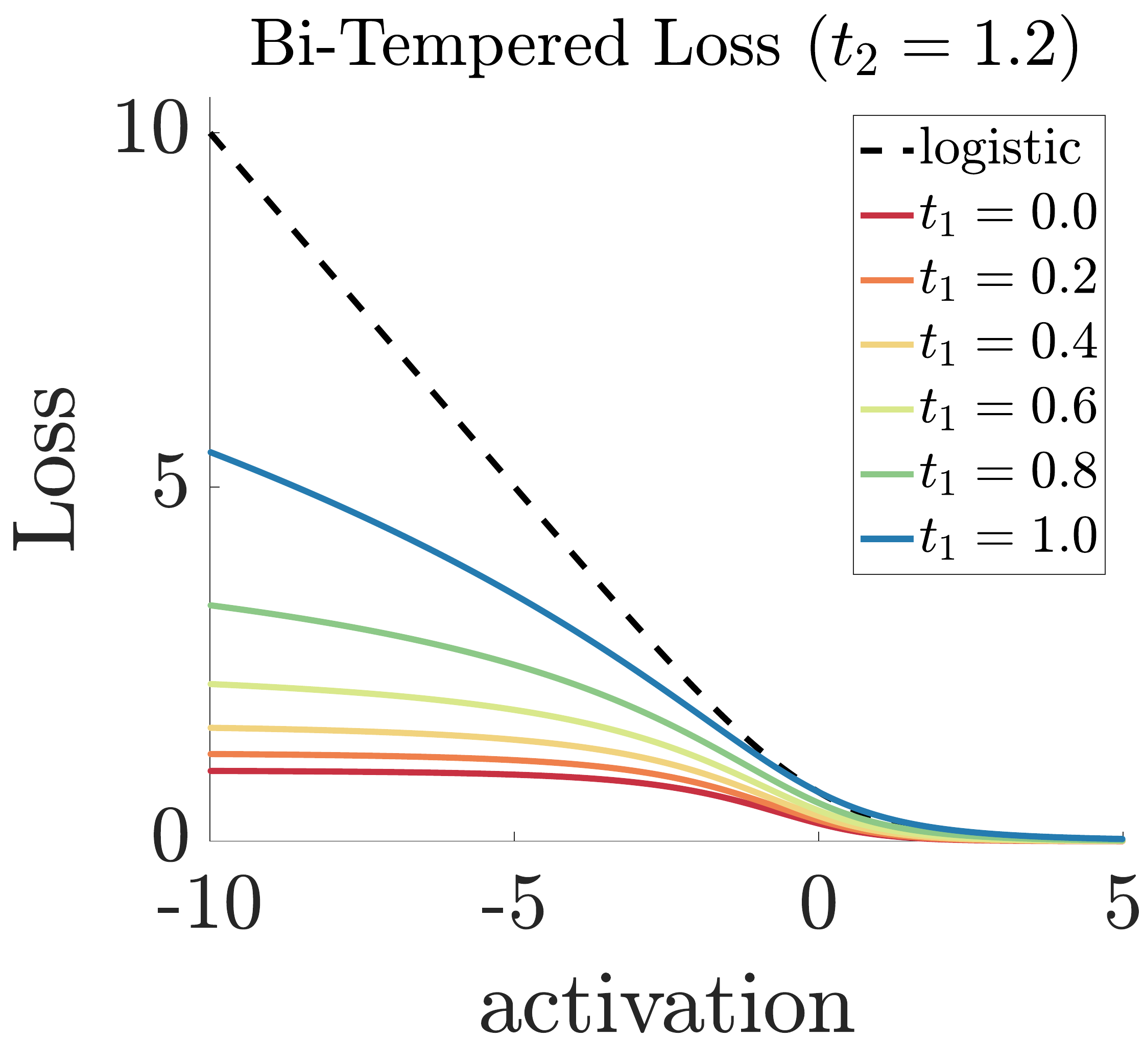}\label{fig:loss_t1}}
\subfloat[]{\includegraphics[height=0.24\textwidth]{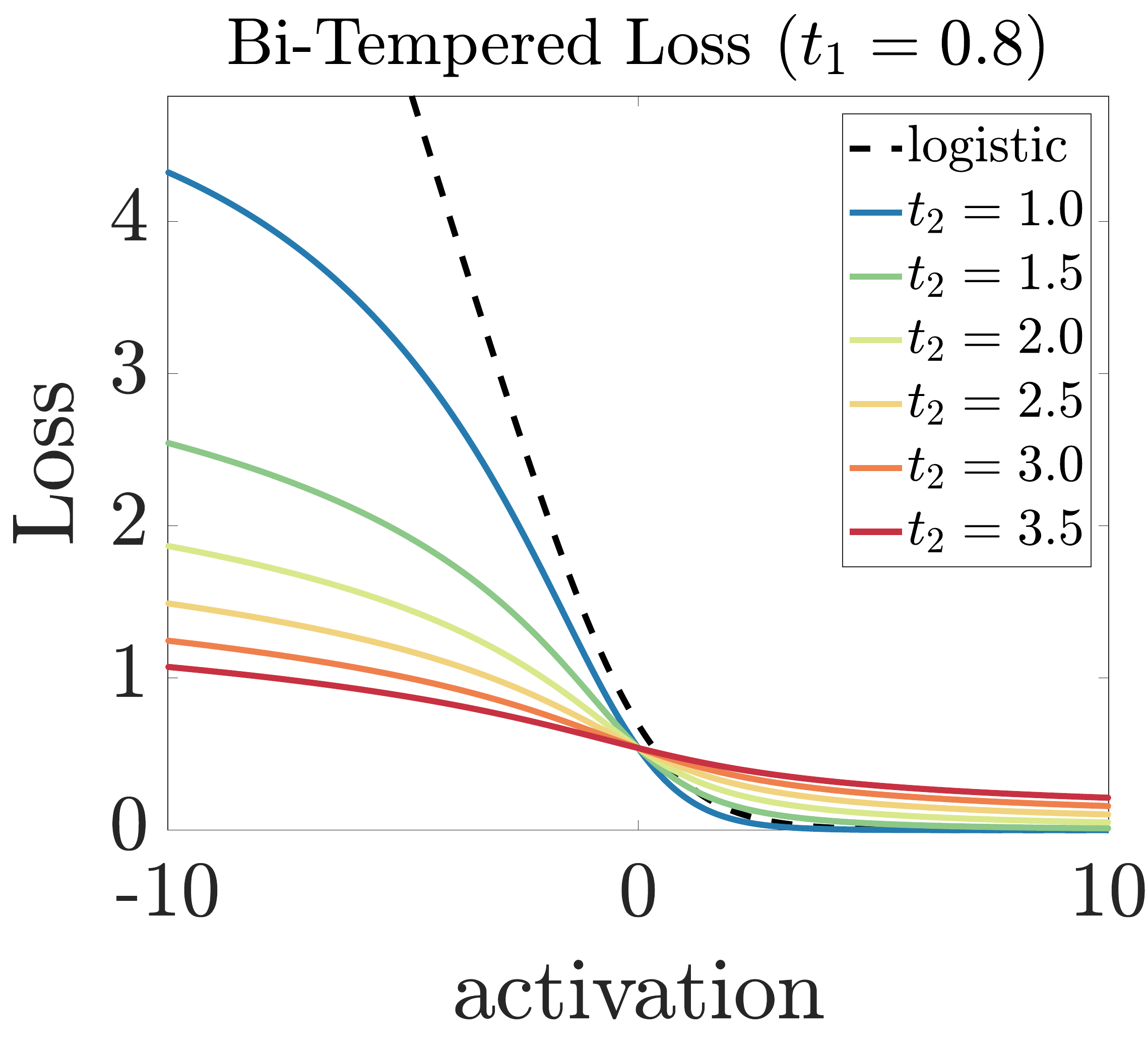}\label{fig:loss_t2}}\hfill
     \caption{Tempered logarithm and exponential functions, and the bi-tempered logistic
     loss: (a) $\log_t$ function, (b) $\exp_t$ function,
     bi-tempered logistic loss when (c) $t_2 = 1.2$ fixed and $t_1 \leq 1$, and (d) $t_1 = 0.8$ fixed and $t_2 \geq 1$.}\label{fig:pre}
\end{center}
\vspace{-0.5cm}
\end{figure*}

The vanilla logistic loss can be viewed as a logarithmic
(relative entropy) divergence that operates on a ``matching'' exponential (softmax) probability assignment \cite{match,matchmulti}.
Its convexity then stems from classical convex duality, using the fact that the probability assignment function is the gradient of the dual function to the entropy on the simplex.
When the $\log_{t_1}$ and $\exp_{t_2}$ are substituted
instead, this duality still holds whenever $t_1 = t_2$,
albeit with a different Bregman divergence, and the induced loss remains convex%
\footnote{In a restricted domain when $t_1 = t_2 < 1$, as discussed later.}.
However, for $t_1 < t_2$, the loss becomes non-convex in the output activations. 
In particular, $0 \leq t_1 < 1$ leads to a bounded loss, while $t_2 > 1$ provides tail-heaviness. 
Figure~\ref{fig:pre} illustrates the tempered $\log_t$ and $\exp_t$ functions as well as examples of our proposed bi-tempered logistic loss function for a $2$-class problem expressed as a function of the activation of the first class. 
The true label is assumed to be class one.

Tempered generalizations of the logistic regression have been introduced 
before~\cite{ding,tlogistic,zhang2018,ourtsallis}. 
The most recent two-temperature method~\cite{ourtsallis} 
is based on the Tsallis divergence and
contains all the previous methods as special cases. 
However,
the Tsallis based divergences do not result in proper loss functions. 
In contrast, we show that the Bregman based construction introduced in this paper 
is indeed proper, which is a requirement for many real-world  applications.

\subsection{Our replacement of the softmax output layer in neural nets}

Consider an arbitrary classification model with multiclass softmax output.
We are given training examples of the form $(\x,\y)$, where $\x$ is
a fixed dimensional input vector and the target $\y$ is
a probability vector over $\nclass$ classes. 
In practice, the targets are often one-hot encoded binary vectors in $\nclass$ dimensions. 
Each input $\x$ is fed to the model, resulting in a vector $\z$ of inputs to the output softmax. 
The softmax layer has typically one trainable weight vector $\w_i$ per class $i$ and yields the predicted class probability
$$
\hat{y}_{i} = \frac{\exp(\hat{a}_{i})}{\sum_{j=1}^k \exp(\hat{a}_{j})} 
= \exp \Big(\hat{a}_{i}-\log\sum_{j=1}^k \exp(\hat{a}_{j})\Big), 
\text{ for linear activation }\hat{a}_{i} = \w_i \cdot \z 
\text{ for class $i$.}
$$
We first replace the softmax function by a generalized heavy-tailed version that uses the $\exp_{t_2}$ function with $t_2 > 1$, which we call the \emph{tempered softmax function}:
$$
\hat{y}_{i} =\exp_{t_2}\big(\hat{a}_{i}-\lambda_{t_2}(\ah)\big),
\text{\,\, where\,\, } 
\lambda_{t_2}(\ah)\in\R \text{\,\, is s.t. \,}
\sum_{j=1}^k \exp_{t_2}\big(\hat{a}_{j}-\lambda_{t_2}(\ah)\big) = 1\, .
$$
This requires computing the normalization value $\lambda_{t_2}(\ah)$ (for each example) via a binary search or an iterative procedure like the one given in Appendix~\ref{a:alg}.
The relative entropy between the true label $\y$ and prediction $\yh$ is 
replaced by the tempered version with temperature $0 \leq t_1 < 1$,
\[
    \sum_{i=1}^{\nclass} \big(y_{i}\,  (\log_{t_1} y_{i} - \log_{t_1} \hat{y}_{i})
	- \tfrac{1}{2-t_1}\, (y_{i}^{2-t_1} - \hat{y}_{i}^{2-t_1}) \big)
 	\;\;\stackrel{\text{if }\y\, \text{one-hot}}=\;\;
 	  - \log_{t_1} \hat{y}_{c} - \tfrac{1}{2-t_1}
 	  \Big( 1-\sum_{i=1}^{\nclass} \hat{y}_{i}^{2-t_1} \Big)\, .
\]
 where $c = \argmax_i\, y_i$ is the index of the one-hot
 class. We motivate this loss in later sections.
When $t_1=t_2=1$, then it reduces to the vanilla logistic loss for the softmax.
On the other hand, when $0 \leq t_1 < 1$, then the loss is bounded, while $t_2 > 1$ gives the tempered softmax function a heavier tail. 

\begin{figure*}[t]
\vspace{-1cm}
\hspace{-2.3cm}
\footnotesize
\begin{tabular}{m{0.1\textwidth} m{0.24\textwidth} m{0.24\textwidth} m{0.24\textwidth} m{0.24\textwidth}}
\multicolumn{1}{c}{\rotatebox{90}{Logistic}}\hspace{-1.4cm} &
\includegraphics[height=0.24\textwidth]{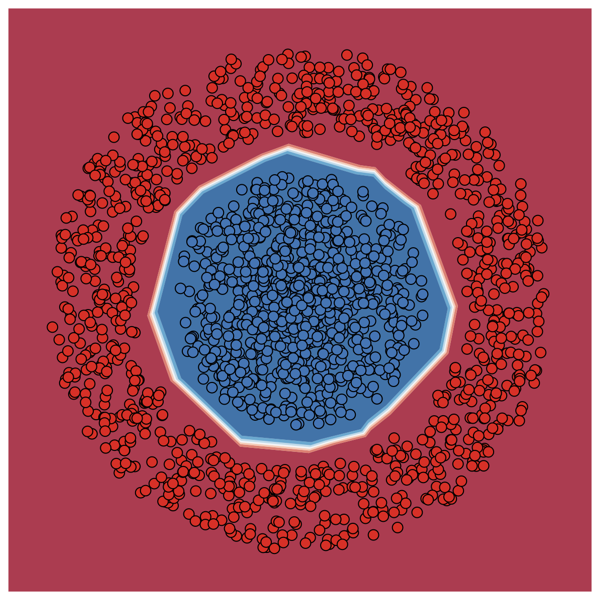} &
\includegraphics[height=0.24\textwidth]{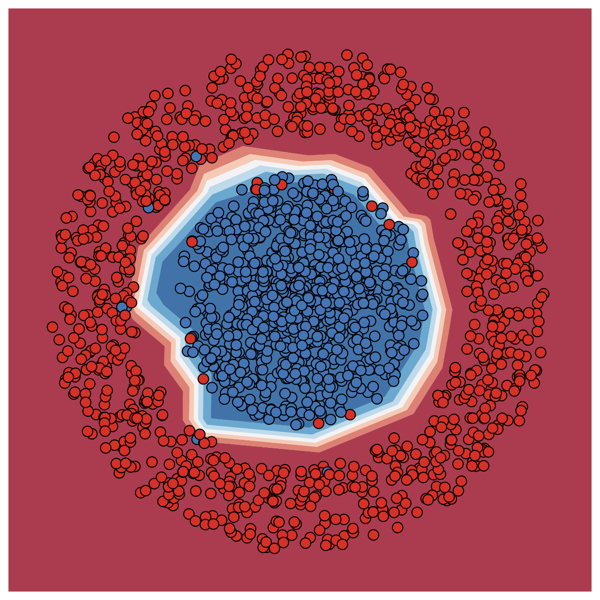} &
\includegraphics[height=0.24\textwidth]{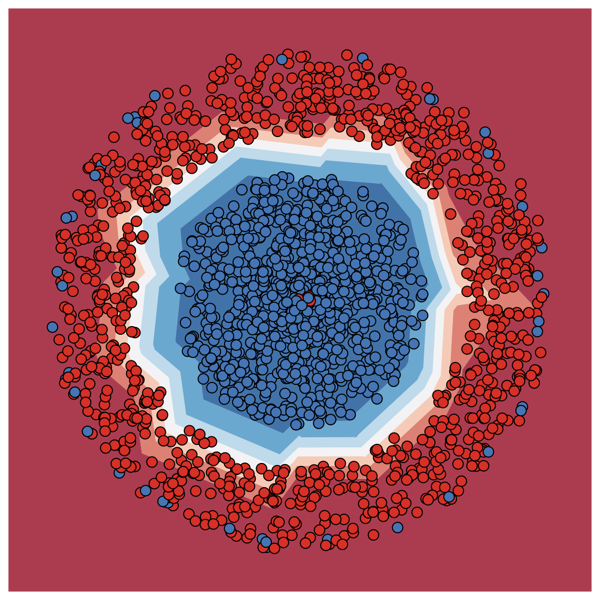} &
\includegraphics[height=0.24\textwidth]{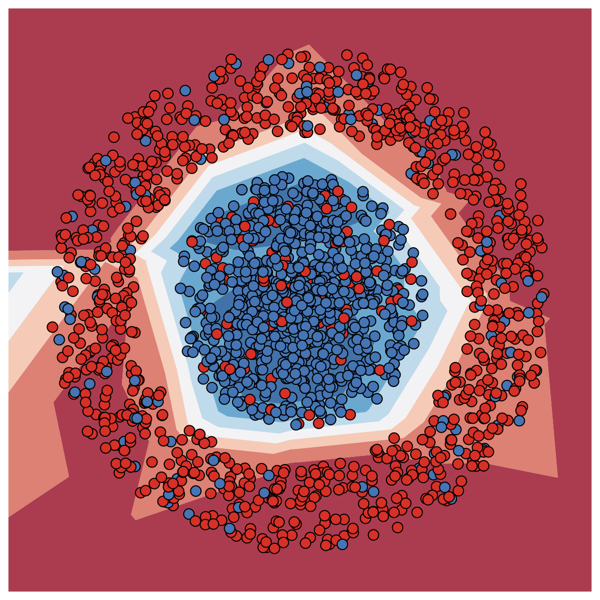}\hfill\\[2mm]
  &\multicolumn{1}{c}
   {\footnotesize $\begin{array}{c}
                      \text{bounded \& heavy-tail}\\
                      (0.2,4.0)
                   \end{array}$} 
  & \multicolumn{1}{c}
    {\footnotesize $\begin{array}{c}
	               \text{only heavy-tail}\\
	               (1.0, 4.0)
                    \end{array}$}
    &  \multicolumn{1}{c}
    {\footnotesize $\begin{array}{c}
	               \text{only bounded}\\
	                (0.2, 1.0)
                    \end{array}$} 
    & \multicolumn{1}{c}
    {\footnotesize $\begin{array}{c}
                	\text{bounded \& heavy-tail}\\
	                 (0.2, 4.0)
                    \end{array}$}\\
\multicolumn{1}{c}{\rotatebox[origin=c]{90}{Bi-Tempered}}\hspace{-1.4cm} &
\includegraphics[height=0.24\textwidth]{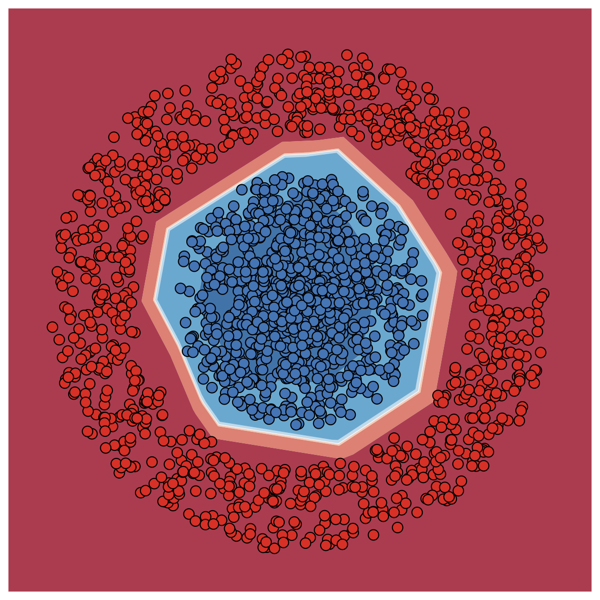} &
\includegraphics[height=0.24\textwidth]{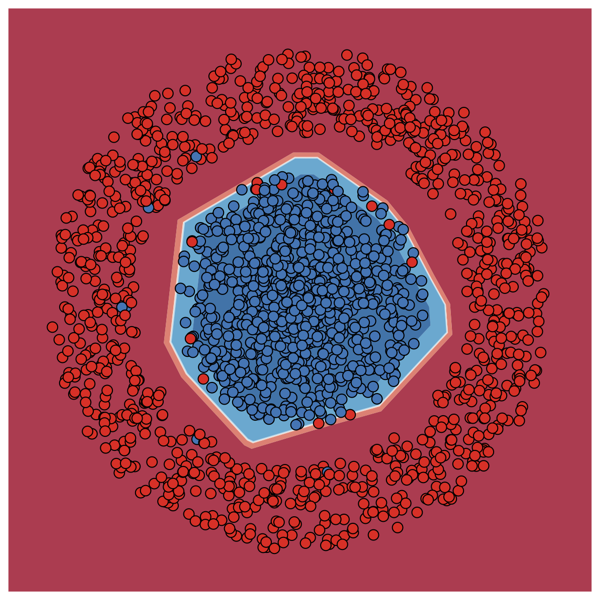} &
\includegraphics[height=0.24\textwidth]{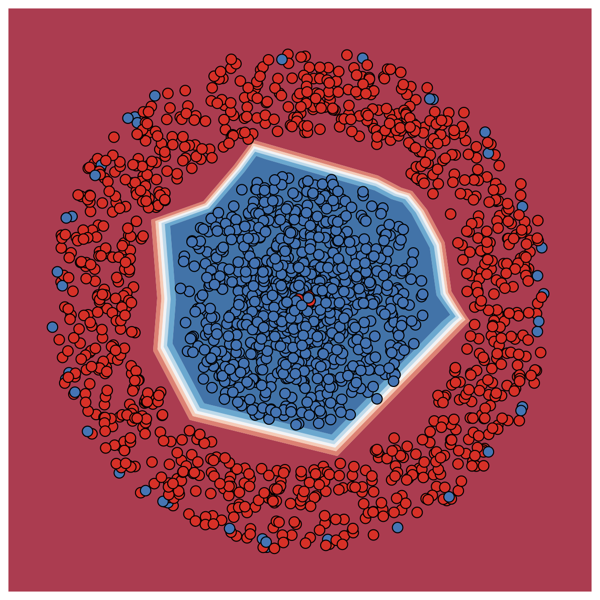} &
\includegraphics[height=0.24\textwidth]{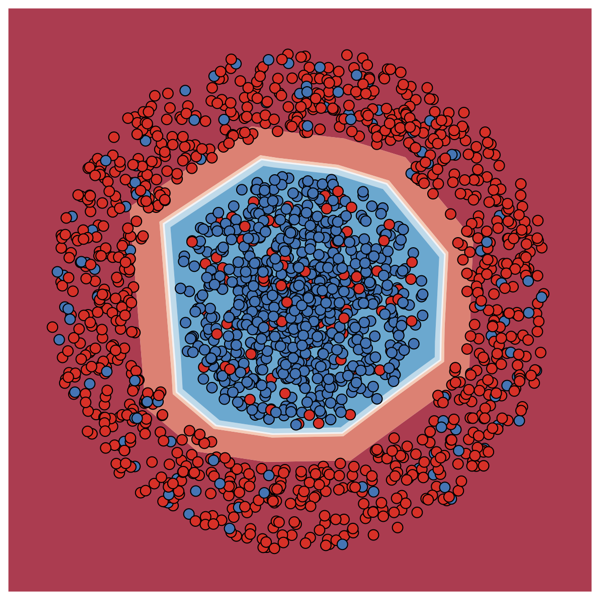}\hfill\\
& \multicolumn{1}{c}{{\small (a)}} & \multicolumn{1}{c}{{\small (b)}} &  \multicolumn{1}{c}{{\small (c)}} & \multicolumn{1}{c}{{\small (d)}}\\
\end{tabular}
\caption{Logistic vs. robust bi-tempered logistic loss: (a) noise-free labels, (b) small-margin label noise, (c) large-margin label noise, and (d) random label noise. The temperature values $(t_1,t_2)$ for the bi-tempered loss are shown above each figure.
    }\label{fig:example}
    \vspace{-5mm}
\end{figure*}

\subsection{An illustration}

We provide some intuition on why both boundedness of the loss as well as tail-heaviness of the tempered softmax are crucial for robustness. 
For this, we train a small two layer feed-forward neural network
on a synthetic binary classification problem in two dimensions.
The network has $10$ and $5$ units in the first and second layer, respectively. 
Figure~\ref{fig:example}(a) shows the results of the logistic and our bi-tempered logistic loss on the noise-free dataset. The network converges to a desirable classification boundary (the white stripe in the figure) using both loss functions. In Figure~\ref{fig:example}(b), we illustrate the effect of adding small-margin label noise to the training examples, targeting those examples that reside near the noise-free classification boundary. The logistic loss clearly follows the noisy examples by stretching the classification boundary. 
On the other hand, using \emph{only} the tail-heavy tempered softmax function ($t_2 = 4$ while $t_1 = 1$, i.e. KL divergence as the divergence) 
can handle the noisy examples by producing more uniform class probabilities. Next, we show the effect of
large-margin noisy examples in Figure~\ref{fig:example}(c), targeting examples that are located far away from the noise-free classification boundary.
The convexity of the logistic loss causes the network to be
highly affected by the noisy examples that are located far
away from the boundary. In contrast, \emph{only} the
boundedness of the loss ($t_1 = 0.2$ while $t_2=1$, meaning
that the outputs are vanilla softmax probabilities) reduces
the effect of the outliers by allocating at most a finite
amount of loss to each example. Finally, we show the effect
of random label noise that includes both small-margin and
large-margin noisy examples in Figure~\ref{fig:example}(d).
Clearly, the logistic loss fails to handle the noise,
while our bi-tempered logistic loss successfully recovers the
appropriate boundary. Note that to handle the random noise,
we exploit \emph{both} boundedness of the loss ($t_1 = 0.2< 1$) 
as well as the tail-heaviness of the probability assignments ($t_2 = 4 > 1$).

The theoretical background as well as our treatment of the softmax layer of the neural networks are developed in later sections.
In particular, we show that special discrete choices of the
temperatures result in a large variety of divergences
commonly used in machine learning. As we show in our experiments,
tuning the two temperatures as continuous
parameters is crucial. 

\vspace{-0.1cm}
\subsection{Summary of the experiments}
We perform experiments by adding synthetic label noise to MNIST and CIFAR-100 datasets and compare the results of our robust bi-tempered loss to the vanilla logistic loss. Our bi-tempered loss is significantly more robust to label noise; it provides $98.56\%$ and $62.55\%$ accuracy on MNIST and CIFAR-100, respectively, when trained with $40\%$ label noise (compared to $97.64\%$ and $53.17\%$, respectively, obtained using logistic loss). The bi-tempered loss also yields improvement over the state-of-the-art results on the ImageNet-2012 dataset using both the Resnet18 and Resnet50 architectures (see Table~\ref{tab:imagenet_results}).
\vspace{-0.2cm}
\section{Preliminaries}
\label{sec:pre}
\vspace{-0.1cm}
\subsection{Convex duality and Bregman divergences on the simplex}
\label{sec:bregman}

We start by briefly reviewing some basic background in convex analysis. 
For a continuously-differentiable strictly convex function
$F:\, \dom \rightarrow \R$, with convex domain $\dom$,
the Bregman divergence between $\y, \yh \in \dom$ induced by $F$ is defined as 
\[ \Delta_F(\y, \yh) = F(\y) - F(\yh) - (\y - \yh)\cdot f(\yh) \ ,\]
where $f(\yh) \coloneqq \nabla F(\yh)$ denotes the 
gradient of $F$ at $\yh$ (sometimes called the link function of $F$).
Clearly $\Delta_F(\y, \yh) \geq 0$  
and $\Delta_F(\y, \yh) = 0$ iff $\y = \yh$.
Also the Bregman divergence is always convex in the first argument and
$\nabla_{\y} \;\Delta_F(\y, \yh) =f(\y)-f(\yh)$, 
but not generally in its second argument. 

Bregman divergence generalizes many well-known divergences
such as the squared Euclidean
 $\Delta_F(\y, \yh) = \frac{1}{2}\,\Vert\y - \yh\Vert^2_2$ 
(with $F(\y) = \frac{1}{2}\,\Vert\y\Vert^2_2$)
and the Kullback-Leilbler (KL) divergence 
$\Delta_F(\y, \yh) = \sum_i (y_i \log\frac{y_i}{\hat{y}_i} - y_i + \hat{y}_i)$
(with $F(\y) = \sum_i (y_i \log y_i - y_i)$). 
Note that the Bregman divergence is not symmetric in general,
i.e., $\Delta_F(\y, \yh) \neq \Delta_F(\yh, \y)$. 
Additionally, the Bregman divergence is invariant to adding affine functions to 
the convex function $F$:\, $\Delta_{F + A}(\y, \yh) =
\Delta_F(\y, \yh)$, where $A(\y) = b + \bm{c}\cdot \y$ for arbitrary $b \in \R, \, \bm{c} \in \R^k$.

For every differentiable strictly convex function $F$ 
(with domain $\dom \subseteq \R_+^\nclass$), there exists a convex dual $F^*:\dom^* \rightarrow \R$ function such that for dual parameter pairs $(\y, \a)$, \, $\a \in \dom^*$, the following holds: 
$\a = f(\y)$ and $\y = f^*(\a) =\nabla F^*(\a) = f^{-1}(\a)$. However, we are mainly interested in the dual of the function $F$ when the domain is restricted to the probability simplex $S^k \coloneqq \{\y \in \R_+^k\vert\, \sum^k_{i=1} y_i = 1\}$. Let $\Fs^*:\check{\dom}^* \rightarrow \R$ denote the convex conjugate of the restricted function $F:\, \dom \cap S^k \rightarrow \R$,
\[
    \Fs^*(\a) 
    = \sup_{\y' \in \dom \cap S^k} \big(\y'\cdot\a - F(\y')\big) 
    = \sup_{\y' \in \dom} \inf_{\lambda\in\R\vphantom{^k}} \big(\y'\cdot\a - F(\y') + \lambda\, (1 - \sum_{i=1}^\nclass y'_i)\big)\, ,
\]
where we introduced a Lagrange multiplier $\lambda \in \R$ to enforce the constraint $\y' \in S^k$. 
At the optimum, the following relationships hold between the primal and dual variables:
\begin{equation}
\label{eq:links}
f(\y) = \a - \lambda(\a)\,\one\,\,\,\text{ and }\,\,\, \y = f^{-1}\big(\a - \lambda(\a)\,\one\big) = \fs^*(\a)\,,
\end{equation}
where $\lambda(\a)$ is chosen so that $\y \in S^k$. 
Note the dependence of the optimum $\lambda$ on $\a$.


\vspace{-0.1cm}
\subsection{Matching losses}

Next, we recall the notion of a \emph{matching loss}~\cite{match,matchmulti,buja,reidwill}.
It arises as a natural way of defining a loss function over
activations $\ah \in \R^\nclass$, by first mapping them to
a probability distribution using a \emph{transfer function}
$\tf : \R^\nclass \to S^\nclass$ that assigns probabilities to classes, and then computing a \emph{divergence} $\Delta_F$ between this distribution and the correct target labels.
The idea behind the following definition is to match the
transfer function and the divergence via duality.

\begin{definition}[Matching Loss]
Let $F:\, S^\nclass \rightarrow \R$ a continuously-differentiable, strictly convex function and let $\tf:\, \R^k \rightarrow S^k$ be a transfer function 
such that $\yh = \tf(\ah)$ denotes the predicted probability distribution
based on the activations $\ah$. Then the loss function
\[
L_F(\ah \mid \y ) \coloneqq \Delta_F(\y, \tf(\ah))\, ,
\]
is called the \emph{matching loss} for $\tf$, if $\tf = \fs^* = \nabla \Fs^*$.
\end{definition}

This matching is useful due to the following property.

\begin{proposition}
\label{lem:convexity}
The matching loss $L_F(\ah \mid \y)$ is convex w.r.t.~the activations $\ah\, \in \text{\textbf{\emph{range}}}((\fs^*)^{-1})$.
\end{proposition}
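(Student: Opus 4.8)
The plan is to reduce the matching loss to a manifestly convex closed form in $\ah$ by exploiting the Fenchel duality between $F$ and $\Fs^*$. First I would set $\yh = \tf(\ah) = \fs^*(\ah)$ and expand the Bregman divergence,
\[
L_F(\ah \mid \y) = \Delta_F(\y, \yh) = F(\y) - F(\yh) - (\y - \yh)\cdot f(\yh)\, .
\]
The key input is the optimality relation~\eqref{eq:links}, which gives $f(\yh) = \ah - \lambda(\ah)\,\one$, expressing the link value at the predicted distribution in terms of the raw activations shifted by the Lagrange multiplier.

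The crucial simplification comes from the simplex constraint. Since both $\y$ and $\yh$ lie in $S^\nclass$, we have $(\y - \yh)\cdot\one = 0$, so the multiplier term drops out and $(\y - \yh)\cdot f(\yh) = (\y - \yh)\cdot\ah$. Next I would use that $\yh = \fs^*(\ah)$ attains the supremum defining $\Fs^*$, which yields the exact Fenchel identity $\Fs^*(\ah) = \yh\cdot\ah - F(\yh)$, i.e.\ $F(\yh) = \yh\cdot\ah - \Fs^*(\ah)$. Substituting both facts and cancelling the $\yh\cdot\ah$ terms collapses the loss to
\[
L_F(\ah \mid \y) = F(\y) + \Fs^*(\ah) - \y\cdot\ah\, .
\]

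Convexity is then immediate from this representation: $F(\y)$ is constant in $\ah$, the term $-\y\cdot\ah$ is affine in $\ah$, and $\Fs^*$ is convex in $\ah$ because it is a convex conjugate, i.e.\ a pointwise supremum of the affine maps $\ah \mapsto \y'\cdot\ah - F(\y')$. Hence $L_F(\cdot \mid \y)$ is the sum of a convex function and an affine function, and is therefore convex.

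The main obstacle is not the algebra but ensuring every identity holds with equality exactly on the stated domain. The restriction $\ah \in \text{range}((\fs^*)^{-1})$ is precisely what guarantees that (i) $\ah$ lies in $\check{\dom}^*$, the domain on which $\Fs^*$ is defined, and (ii) the supremum defining $\Fs^*(\ah)$ is actually attained at $\yh = \fs^*(\ah)$, so that the Fenchel identity and the relation~\eqref{eq:links} are valid at $\ah$. I would also note that $\fs^*$ is invariant under the shift $\ah \mapsto \ah + c\,\one$, since the multiplier $\lambda$ absorbs any constant shift and $\Fs^*(\ah + c\,\one) = \Fs^*(\ah) + c$; thus $\Fs^*$ is merely affine, not strictly convex, along the $\one$ direction. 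This is harmless, as the claim asserts convexity rather than strict convexity, and the closed form above already exhibits it.
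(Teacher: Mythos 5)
Your proof is correct, and at its core it rests on the same duality argument as the paper's, but packaged differently. The paper's proof invokes the dual-divergence identity \eqref{eq:dual-div} (stated there without proof) to rewrite the loss as $\Delta_{\Fsub}\big(\ah, (\fs^*)^{-1}(\y)\big)$ and then appeals to convexity of a Bregman divergence in its first argument; expanding that Bregman divergence and applying the Fenchel--Young equality at $(\fs^*)^{-1}(\y)$ yields exactly your closed form $F(\y) + \Fs^*(\ah) - \y\cdot\ah$, so the two decompositions agree term by term. What your route buys is self-containedness and slightly weaker hypotheses: you derive the key identity directly from \eqref{eq:links} together with the observation that $(\y - \yh)\cdot\one = 0$ eliminates the Lagrange-multiplier term --- which is precisely the computation the paper defers to its appendix treatment of the tempered case --- and your final step needs only that $\Fs^*$ is a pointwise supremum of affine maps, rather than the differentiability and strict convexity of $\Fs^*$ that the Bregman-divergence formulation implicitly presumes. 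Your attention to the domain restriction (attainment of the supremum exactly on $\text{\textbf{range}}((\fs^*)^{-1})$) and your closing remark that the loss is merely affine (in fact constant) along the $\one$ direction are both correct and consistent with the claim, since only convexity, not strict convexity, is asserted.
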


\begin{proof}
Note that $\Fs^*$ is a strictly convex function and
the following relation holds between the 
divergences induced by $F$ and $\Fs^*$:
\begin{equation}
\label{eq:dual-div} \Delta_F\big(\y, \yh\big) =
    \Delta_{\Fsub}\big((\fs^*)^{-1}(\yh),\,(\fs^*)^{-1}(\y)\big).
\end{equation}
Thus for any $\ah$ in the range of $(\fs^*)^{-1}$,
\[ \Delta_F\big(\y, \fs^*(\ah)\big) = \Delta_{\Fsub}\big(\ah, (\fs^*)^{-1}(\y)\big) .\]
The claim now follows from the convexity of the Bregman divergence $\Delta_{\Fsub}$ w.r.t.~its first argument.
\end{proof}

The original motivating example for the matching loss was
the logistic loss \cite{match,matchmulti}. 
It can be obtained as the matching loss for the softmax function
\[
\hat{y}_i = [\fs^*(\ah)]_i =
\frac{\exp(\hat{a}_i)}{\sum_{j=1}^{\nclass} \exp(\hat{a}_j)},
\]
which corresponds to the relative entropy (KL) divergence 
\[
L_F(\ah \mid \y) = 
\Delta_F\big(\y, \fs^*(\ah)\big) = 
    \sum_{i=1}^{\nclass} y_i \, (\log y_i - \log \hat{y}_i) = \sum_{i=1}^{\nclass} \big(y_i\,\log y_i - y_i\,\hat{a}_i)\big) + \log \big(\sum_{i=1}^\nclass \exp(\hat{a}_i)\big)\, ,
\]
induced from the negative entropy function 
$F(\y) = \sum_{i=1}^{\nclass} (y_i \log y_i - y_i)$.
We next define a family of convex functions $F_t$ parameterized by a temperature $t \geq 0$. 
The matching loss $L_{F_t}(\ah \mid \y) = \Delta_{F_t}\big(\y, \fs^*_t(\ah)\big)$
for the link function $\fs^*_t$ of $\Fs^*_t$ is convex in the activations $\ah$. 
However, by letting the temperature $t_2$ of $\fs^*_{t_2}$ 
be larger than the temperature $t_1$ of $F_{t_1}$, we
construct bounded non-convex losses with heavy-tailed transfer functions.

\vspace{-0.2cm}
\section{Tempered Matching Loss}
\vspace{-0.1cm}
\label{sec:temp}
We start by introducing a generalization of the relative entropy, denoted by $\Delta_{F_t}$, induced by a strictly convex function $F_t:\R_+^k \rightarrow \R$ with a temperature parameter $t \geq 0$.
The convex function $F_t$ is chosen so that its gradient takes the form\footnote{Here, the $\log_t$ function is applied elementwise.} $f_t(\y)\coloneqq \nabla F_t(\y) = \log_t \y$. Via simple integration, we obtain that
\[
F_t(\y) = \sum_{i=1}^k \big(y_i \log_t y_i + \tfrac{1}{2-t}(1 - y_i^{2-t})\big)\, .
\]
Indeed, $F_t$ is a convex function since $\nabla^2 F_t(\y) = \text{diag}(\y^{-t}) \succeq 0$ for any $\y \in \R^k_+$. 
In fact, $F_t$ is strongly convex, for $0 \leq t \leq 1$:

\begin{lemma}
\label{lem:strong}
The function $F_t$, with $0 \leq t \leq 1$, is
    $B^{-t}$--strongly convex over the set $\{\y \in
    \R_+^k:\, \Vert\y\Vert_{2-t} \leq B\}$ w.r.t. the $\LL_{2-t}$-norm.
\end{lemma}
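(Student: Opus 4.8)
The plan is to verify the standard second-order characterization of strong convexity with respect to a norm. Since the constraint set $\Set = \{\y \in \R_+^\nclass : \Vert\y\Vert_{2-t} \leq B\}$ is convex (for $t\in[0,1]$ the exponent $2-t\in[1,2]$, so $\Vert\cdot\Vert_{2-t}$ is a genuine norm and its ball, intersected with the positive orthant, is convex) and $F_t$ is twice continuously differentiable on the interior of $\R_+^\nclass$, it suffices to show that the Hessian quadratic form obeys
\[
\ub^\top \nabla^2 F_t(\y)\, \ub \;\geq\; B^{-t}\,\Vert\ub\Vert_{2-t}^2
\qquad\text{for every } \y\in\Set,\ \ub\in\R^\nclass .
\]
Integrating this bound twice along the segment joining any two points of $\Set$ then yields the defining inequality of $B^{-t}$–strong convexity.

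The core of the argument is a single application of H\"older's inequality. Using $\nabla^2 F_t(\y) = \mathrm{diag}(\y^{-t})$, the quadratic form equals $\sum_i y_i^{-t} u_i^2$, so after absorbing the constant via $\Vert\y\Vert_{2-t}^{t}\leq B^{t}$ the claim reduces to
\[
\Vert\ub\Vert_{2-t}^2 \;=\; \Big(\textstyle\sum_i |u_i|^{2-t}\Big)^{2/(2-t)} \;\leq\; \Vert\y\Vert_{2-t}^{t}\;\sum_i y_i^{-t} u_i^2 .
\]
The key trick is to split each summand as $|u_i|^{2-t} = \big(y_i^{-t}u_i^2\big)^{(2-t)/2}\cdot y_i^{\,t(2-t)/2}$ and apply H\"older with the conjugate exponents $r=\tfrac{2}{2-t}$ and $s=\tfrac{2}{t}$ (which satisfy $1/r+1/s=1$ and both lie in $[1,\infty)$ for $t\in(0,1]$). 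This gives $\sum_i |u_i|^{2-t} \leq \big(\sum_i y_i^{-t}u_i^2\big)^{(2-t)/2}\big(\sum_i y_i^{2-t}\big)^{t/2}$, and raising to the power $2/(2-t)$ produces exactly the displayed inequality, since $\sum_i y_i^{2-t} = \Vert\y\Vert_{2-t}^{2-t}$.

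Finally I would invoke the constraint $\Vert\y\Vert_{2-t}\leq B$ together with $t\geq 0$ to replace $\Vert\y\Vert_{2-t}^{t}$ by $B^{t}$, completing the Hessian bound. The degenerate endpoint $t=0$ is immediate ($\nabla^2 F_0(\y)=I$ and the claim is $\Vert\ub\Vert_2^2\geq\Vert\ub\Vert_2^2$), while at $t=1$ the H\"older step specializes to Cauchy--Schwarz. The only genuine obstacle is discovering the correct multiplicative split and the matching pair of exponents; once $r=\tfrac{2}{2-t}$ and $s=\tfrac{2}{t}$ are identified, everything is routine, and the sole points needing care are checking $r,s\geq 1$ over the admissible range of $t$ and restricting to $y_i>0$ so that $y_i^{-t}$ (hence $\nabla^2 F_t$) is well defined.
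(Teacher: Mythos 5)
Your proof is correct, and it reaches the required Hessian bound by a genuinely different device than the paper. Both arguments share the same frame: reduce $B^{-t}$-strong convexity to the second-order condition $\sum_i y_i^{-t}u_i^2 \geq B^{-t}\Vert\ub\Vert_{2-t}^2$ for all $\y$ in the set, then integrate along segments (the paper invokes this reduction as a cited lemma rather than redoing the Taylor argument, but that is cosmetic). Where you differ is in how the pointwise inequality is established. You prove it by a single application of H\"older's inequality with conjugate exponents $r=\tfrac{2}{2-t}$, $s=\tfrac{2}{t}$ after the split $|u_i|^{2-t}=\big(y_i^{-t}u_i^2\big)^{(2-t)/2}y_i^{t(2-t)/2}$; I checked the exponent bookkeeping and it is exact, including the degenerate case $t=0$ which you handle separately. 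The paper instead fixes the direction and minimizes the quadratic form $\sum_i v_i^2/u_i^t$ over the boundary of the $\LL_{2-t}$-ball via a Lagrange multiplier, finding the minimizer proportional to the direction vector and plugging it back in. Your route buys rigor and brevity: the paper's optimization argument has muddled notation (the roles of $\ub$ and $\vb$ are effectively swapped mid-proof), does not verify that the stationary point is a global minimum, and implicitly assumes nonnegative components of the direction; H\"older sidesteps all of that. The paper's route buys sharpness: by exhibiting the minimizer it shows the constant $B^{-t}$ is attained, hence best possible --- a fact you could also recover from the equality case of H\"older (equality iff $\ub\propto\y$ with $\Vert\y\Vert_{2-t}=B$), if you wanted it.
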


See Appendix~\ref{app:strong} for a proof.
The Bregman divergence induced by $F_t$ is then given by
\begin{equation}
\begin{aligned}
\label{eq:div}
    \Delta_{F_t}(\y, \yh) &= \sum_{i=1}^k \big(y_i \log_t y_i - y_i \log_t \hat{y}_i - \tfrac{1}{2-t}y_i^{2-t} + \tfrac{1}{2-t}\hat{y}_i^{2-t}\big) \\ 
    &= \sum_{i=1}^k \Big(\tfrac{1}{(1-t)(2-t)}\,y_i^{2-t} -
    \tfrac{1}{1-t}\,y_i \hat{y}_i^{1-t} + \tfrac{1}{2-t}\,\hat{y}_i^{2-t}\Big).
\end{aligned}
\end{equation}
The second form may be recognized as $\beta$-divergence
\cite{beta}
with parameter $\beta = 2-t$. The divergence~\eqref{eq:div}
includes many well-known divergences such as squared
Euclidean, KL, and Itakura-Saito divergence as special
cases. A list of additional special cases is given in
Table~\ref{tab:special} of Appendix \ref{a:other}. 

The following corollary is the direct consequence of the
strong convexity of $F_t$, for $0 \leq t < 1$.

\begin{corollary}
\label{cor:bound}
Let $\max(\Vert\y\Vert_{2-t},\, \Vert\yh\Vert_{2-t}) \leq B$\, for\, $0 \leq t < 1$. Then
\[
\frac{1}{2B^t} \Vert \y - \yh\Vert^2_{2-t} \leq \Delta_{F_t}(\y, \yh) \leq \frac{B^t}{2\,(1-t)^2} \Vert \y^{1-t} - \yh^{1-t}\Vert^2_{\frac{2-t}{1-t}}\, .
\]
\end{corollary}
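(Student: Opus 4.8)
The plan is to handle the two inequalities separately: the lower bound is essentially a restatement of Lemma~\ref{lem:strong}, while the upper bound follows by dualizing strong convexity into smoothness of the conjugate and invoking the Bregman duality identity that already appeared in the proof of Proposition~\ref{lem:convexity}.

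\textbf{Lower bound.} By Lemma~\ref{lem:strong}, $F_t$ is $B^{-t}$--strongly convex with respect to $\LL_{2-t}$ over the ball $\{\y:\Vert\y\Vert_{2-t}\le B\}$. Strong convexity is exactly the statement that the induced Bregman divergence dominates the squared norm, so for $\y,\yh$ in this ball, $\Delta_{F_t}(\y,\yh)\ge \tfrac12 B^{-t}\Vert\y-\yh\Vert_{2-t}^2$, which is the left inequality. This direction is immediate once Lemma~\ref{lem:strong} is in hand.

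\textbf{Upper bound.} Here I would pass to the conjugate $F_t^*$. Since $f_t=\nabla F_t=\log_t$ and hence $\nabla F_t^*=f_t^{-1}=\exp_t$, we have $\nabla^2 F_t^*(\a)=\mathrm{diag}\big(\exp_t(\a)^{\,t}\big)$, and the Bregman duality identity gives $\Delta_{F_t}(\y,\yh)=\Delta_{F_t^*}\big(f_t(\yh),f_t(\y)\big)$. Writing this conjugate divergence in integral (Taylor) form along the segment $\a_s=(1-s)\log_t\y+s\log_t\yh$ and setting $\bm{\xi}=f_t(\yh)-f_t(\y)=\tfrac{1}{1-t}(\yh^{1-t}-\y^{1-t})$, the task reduces to bounding $\sum_i (\y_s)_i^{\,t}\,\xi_i^2$, where $\y_s=\exp_t(\a_s)$. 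A Hölder inequality with the conjugate exponents $\tfrac{2-t}{t}$ and $\tfrac{2-t}{2(1-t)}$ yields $\sum_i (\y_s)_i^{\,t}\,\xi_i^2\le \Vert\y_s\Vert_{2-t}^{\,t}\,\Vert\bm{\xi}\Vert_{(2-t)/(1-t)}^2$, and $(2-t)/(1-t)$ is precisely the norm dual to $\LL_{2-t}$. Substituting $\bm{\xi}$ and integrating ($\int_0^1(1-s)\,ds=\tfrac12$) then produces the claimed right-hand side $\tfrac{B^t}{2(1-t)^2}\Vert\y^{1-t}-\yh^{1-t}\Vert_{(2-t)/(1-t)}^2$, provided $\Vert\y_s\Vert_{2-t}\le B$ for every $s\in[0,1]$.

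\textbf{Main obstacle.} The only nontrivial point is verifying that the interpolants $\y_s$ remain inside the ball, since they are \emph{not} the ordinary segment from $\y$ to $\yh$ but rather its image under $\exp_t\circ(\text{linear})\circ\log_t$; a crude coordinatewise estimate $(\y_s)_i\le\max(y_i,\hat{y}_i)$ only yields $\Vert\y_s\Vert_{2-t}\le 2^{1/(2-t)}B$ and loses a constant factor. The observation that removes this loss is that $\log_t$ linearizes the $(1-t)$-power: a direct computation gives $(\y_s)_i^{\,1-t}=(1-s)\,y_i^{1-t}+s\,\hat{y}_i^{1-t}$, so $\Vert\y_s\Vert_{2-t}^{2-t}=\sum_i\big((1-s)y_i^{1-t}+s\hat{y}_i^{1-t}\big)^{(2-t)/(1-t)}$. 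Because $(2-t)/(1-t)\ge 1$, convexity of $u\mapsto u^{(2-t)/(1-t)}$ gives $\Vert\y_s\Vert_{2-t}^{2-t}\le(1-s)\Vert\y\Vert_{2-t}^{2-t}+s\Vert\yh\Vert_{2-t}^{2-t}\le B^{2-t}$, i.e.\ $\Vert\y_s\Vert_{2-t}\le B$. In other words, the ball is geodesically convex for the $t$-interpolation, which is exactly what makes the constant $B^t$ (rather than something larger) correct. With this in place the integral bound closes and the corollary follows.
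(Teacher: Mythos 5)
Your proof is correct, and its lower-bound half is exactly the paper's: Lemma~\ref{lem:strong} plus the first-order definition of strong convexity (the paper routes this through its Lemma~\ref{lem:suff}, which says the same thing). Where you genuinely diverge is the upper bound. The paper also passes through the duality identity $\Delta_{F_t}(\y,\yh)=\Delta_{F_t^*}(\log_t\yh,\,\log_t\y)$, but then invokes, as a black box, the conjugacy theorem from its online-learning reference ($F$ is $\sigma$-strongly convex w.r.t.\ a norm iff $F^*$ is $\tfrac{1}{\sigma}$-strongly smooth w.r.t.\ the dual norm) to get $\Delta_{F_t^*}(\log_t\yh,\log_t\y)\le\frac{B^t}{2}\Vert\log_t\y-\log_t\yh\Vert^2_{\frac{2-t}{1-t}}$, and finishes by substituting the definition of $\log_t$. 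You instead prove the needed smoothness estimate by hand: the integral (Taylor) form of $\Delta_{F_t^*}$ along the dual segment, the Hessian formula $\nabla^2F_t^*(\a)=\mathrm{diag}\big(\exp_t(\a)^t\big)$, and H\"older with exponents $\frac{2-t}{t}$ and $\frac{2-t}{2(1-t)}$, whose conjugacy is exactly the duality of $\Vert\cdot\Vert_{2-t}$ and $\Vert\cdot\Vert_{\frac{2-t}{1-t}}$. This is longer but buys something real: the quoted conjugacy theorem presupposes strong convexity on the whole domain, whereas Lemma~\ref{lem:strong} only gives it on the ball $\Vert\y\Vert_{2-t}\le B$, so one must verify that the smoothness constant $B^t$ is actually valid along the dual segment being integrated. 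Your observation that $\exp_t$-interpolation linearizes the $(1-t)$-power, $(\y_s)_i^{1-t}=(1-s)\,y_i^{1-t}+s\,\hat{y}_i^{1-t}$, whence $\Vert\y_s\Vert_{2-t}\le B$ by convexity of $u\mapsto u^{(2-t)/(1-t)}$, closes precisely this gap; the paper's proof leaves the point implicit (it can be repaired by conjugating the ball-restricted function, but that is never said). So: same skeleton---dualize and bound the conjugate divergence by the dual norm squared---but your self-contained computation is more elementary and more careful about the local-versus-global strong convexity issue, at the cost of the explicit H\"older and interpolation work that the paper's citation absorbs.
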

See Appendix~\ref{app:strong} for a proof. Thus for $0 \leq t < 1$, $\Delta_{F_t}(\y, \yh)$ is upper-bounded by $\frac{2\,B^{2-t}}{(1-t)^2}$. Note that boundedness on the simplex also implies boundedness in the $L_{2-t}$-ball of radius $1$. Thus, Corollary~\ref{cor:bound} immediately implies the boundedness of the divergence $\Delta_{F_t}(\y, \yh)$ with $0 \leq t < 1$ over the simplex.
Alternate parameterizations of the family $\{F_t\}$ of convex functions and their corresponding Bregman divergences are discussed in Appendix~\ref{a:other}.

\vspace{-0.1cm}
\subsection{Tempered softmax function}

Now, let us consider the convex function $F_t(\y)$ 
when its domain is restricted to the probability simplex $S^k$. 
We denote the constrained dual of $F_t(\y)$ by $\Fs_t^*(\a)$,
\begin{equation}
\label{eq:dual-Fst}
\Fs_t^*(\a) = \sup_{\y' \in S^k}\, 
    \big(\y'\cdot\a - F_t(\y')\big) = \sup_{\y' \in \R^k_+}\, \inf_{\lambda_t\in\R\vphantom{^k}}\,
    \big(\y'\cdot\a - F_t(\y') + \lambda_t\, \big(1-\sum_{i=1}^k y'_i\big)\big)\, .
\end{equation}
Following our discussion in Section~\ref{sec:bregman} and using~\eqref{eq:links}, the transfer function induced by $\Fs_t^*$ is%
\footnote{%
Note that due to the simplex constraint, the link function $\y = \fs^*_t (\a) = \nabla \Fs_t^*(\a)= \exp_t\big(\a - \lambda_t(\a)\big)$ is different from 
$f_t^{-1}(\a)=f_t^*(\a)= \nabla F_t^*(\a) = \exp_t(\a)$, i.e., the gradient of the unconstrained dual.}
\begin{align}
    \y = \exp_t\big(\a - \lambda_t(\a)\,\bm{1}\big),  \label{eq:prob}
    \quad\text{with $\lambda_t(\a)$~~s.t.}
    \quad
    \sum_{i=1}^k \exp_t\big(a_i - \lambda_t(\a)\big) = 1.
\end{align}

\vspace{-0.5cm}
\subsection{Matching loss of tempered softmax}

Finally, we derive the matching loss function $L_{F_t}$.
%
Plugging in~\eqref{eq:prob} into~\eqref{eq:div}, we have
$$
L_t(\ah \mid \y)
= \Delta_{F_t}\big(\y,\exp_t(\ah-\lambda_t(\ah))\big)
.
$$
Recall that by Proposition~\ref{lem:convexity}, this loss is convex in activations $\ah \, \in \text{\textbf{range}}((\fs_t^*)^{-1})$.
In general, $\lambda_t(\a)$ does not have a closed form solution.
However, it can be easily approximated via an iterative method, e.g., a binary search.
An alternative (fixed-point) algorithm for computing $\lambda_t(\a)$ for $t>1$ is given in 
Algorithm~\ref{alg:iterative} of Appendix \ref{a:alg}.

\vspace{-0.2cm}
\section{Robust Bi-Tempered Logistic Loss}

A more interesting class of loss functions can be obtained
by introducing a ``mismatch'' between the temperature of the divergence function~\eqref{eq:div} and the temperature of the probability assignment function, i.e. the tempered softmax~\eqref{eq:prob}. 
That is, we consider loss functions of the following type: 
\begin{equation}
    \label{eq:2temp}
\forall\, 0\! \leq\! t_1\! < 1\! <\! t_2\!:\,
    \loss(\ah \mid \y) := \Delta_{F_{\scalebox{0.6}{$t$}_{\scalebox{0.4}{$1$}}}} 
    \big(\y, \exp_{t_2}(\ah - \lambda_{t_2}(\ah))\big), \ 
	  \text{ with } \lambda_t(\ah) \ \
	  \text{ s.t. }
          \sum_{i=1}^k \exp_t\big(a_i - \lambda_t(\a)\big) = 1.
\end{equation}

We call this the \emph{Bi-Tempered Logistic Loss}. Note that
for the prescribed range of the two temperatures, 
the loss is bounded and has a heavier-tailed probability assignment
function compared to the vanilla softmax function. As
illustrated in our 2-dimensional example in Section~\ref{sec:intro}, 
both properties are crucial for handling noisy examples. 
The derivative of the bi-tempered loss is given in
Appendix \ref{a:deriv}.
In the following, we discuss the properties of this loss for classification.

\vspace{-0.2cm}
\subsection{Properness and Monte-Carlo sampling}

Let $\Puk(\x, y)$ denote the (unknown) joint probability
distribution of the observed variable $\x \in \R^m$ and
the class label $y \in [k]$. The goal of discriminative
learning is to approximate the posterior distribution of
the labels $\Puk(y \mid \x)$ via a parametric model
$P(y \mid \x;\Theta)$ parameterized by $\Theta$. 
Thus the model fitting can be expressed as minimizing the following expected loss between the data and the model's label probabilities
\begin{equation}
\label{eq:exp-loss}
  \EE_{\Puksub(\x)}\Big[\Delta \big(\Puk(y \mid \x), P(y \mid \x;\Theta)\big)\Big]\, ,  
\end{equation}
where $\Delta\big(\Puk(y \mid \x), P(y \mid \x;\Theta)\big)$ is any proper divergence 
measure between $\Puk(y \mid \x)$ and $P(y \mid \x;\Theta)$. 
We use $\Delta:=\Delta_{F_{\scalebox{0.6}{$t$}_{\scalebox{0.4}{$1$}}}}$ 
as the divergence and
$P(y = i \mid \x;\Theta) \coloneqq P(i \mid \x;\Theta) 
= \exp_{t_2}(\hat{a}_i - \lambda_{t_2}(\ah)),$ 
where $\ah$ is the activation vector
of the last layer given input $\x$ and $\Theta$ is the set
of all weights of the network. 
Ignoring the constant terms w.r.t. $\Theta$, our loss~\eqref{eq:exp-loss} becomes
\begin{subequations}
\begin{align}
    & \EE_{\Puksub(\x)}\Big[\sum_i \big(-\Puk(i \mid \x) \log_t P(i \mid \x;\Theta) + \frac{1}{2-t}\, P(i \mid \x;\Theta)^{2-t}\big) \Big]\label{eq:exp-robust1}\\
    & \quad\quad = -\EE_{\Puksub(\x, y)}\Big[\log_t P(y \mid \x;\Theta)\Big] + \EE_{\Puksub(\x)}\Big[\frac{1}{2-t}\, \sum_i P(i \mid \x;\Theta)^{2-t}\big)\Big]\label{eq:exp-robust2}\\
    &\quad\quad \approx \frac{1}{N}\sum_n \big(-\log_t P(y_n \mid \x_n;\Theta) + \frac{1}{2-t}\, \sum_i P(i \mid \x_n;\Theta)^{2-t}\big)\label{eq:exp-robust3}
\end{align}
\end{subequations}
where from~\eqref{eq:exp-robust2} to~\eqref{eq:exp-robust3}, we perform a Monte-Carlo
approximation of the expectation w.r.t. $\Puksub(\x, y)$ using
samples $\{(\x_n, y_n)\}_{n=1}^N$. 
Thus,~\eqref{eq:exp-robust3} is an unbiased approximate of the expected loss~\eqref{eq:exp-loss}, thus is a \emph{proper} loss~\cite{proper}. 

Following the same approximation steps for the Tsallis divergence, we have
\begin{align*}
    \EE_{\Puksub(\x)}\Big[\underbrace{- \sum_i \Puk(i \mid \x) \log_t\frac{P(i \mid \x;\Theta)}{\Puk(i \mid \x)}}_{\Delta_t^{\text{\tiny Tsallis}}\big(\Puk(y \mid \x),\, P(y \mid \x;\Theta)\big)}\Big] 
    & \approx -\frac{1}{N}\, \sum_n \log_t\frac{P(y_n \mid \x_n;\Theta)}{\Puk(y_n \mid \x_n)}\, ,
\end{align*}
which, due to the fact that $\log_t\frac{a}{b}\neq \log_t a - \log_t b$ in general, requires access to the (unknown) conditional distribution $\Puk(y \mid \x)$. Thus, the approximation $-\frac{1}{N}\, \sum_n \log_t P(y_n \mid \x_n;\Theta)$ proposed in~\cite{ourtsallis} by approximating $\Puk(y_n \mid \x_n)$ by $1$ is not an unbiased estimator of~\eqref{eq:exp-loss} and therefore, not proper.

\subsection{Bayes-risk consistency}

Another important property of a multiclass loss is the Bayes-risk consistency~\cite{bayes-multi}. Bayes-risk consistency of the two-temperature logistic loss based on the Tsallis divergence was shown in~\cite{ourtsallis}. As expected, the tempered Bregman loss~\eqref{eq:2temp} is also Bayes-risk consistent, even in the non-convex case.
\begin{proposition}
\label{prop:bayes}
The multiclass bi-tempered logistic loss $\loss(\ah\,\vert\, y)$ is Bayes-risk consistent.
\end{proposition}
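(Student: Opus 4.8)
The plan is to reduce the statement to a pointwise (conditional) calibration claim and then invoke the standard characterization that multiclass classification-calibration implies Bayes-risk consistency~\cite{bayes-multi}. Fix an input $\x$ and write $\boldsymbol{\eta} = \Puk(\cdot \mid \x) \in S^k$ for the true conditional label distribution, with $\bm{e}_i$ denoting the one-hot vector of class $i$. Abbreviating the tempered softmax transfer map by $\yh = \fs^*_{t_2}(\ah) = \exp_{t_2}(\ah - \lambda_{t_2}(\ah))$, the conditional surrogate risk at $\x$ is
\[
R(\ah \mid \boldsymbol{\eta}) := \sum_{i=1}^k \eta_i\, \loss(\ah \mid \bm{e}_i) = \sum_{i=1}^k \eta_i\, \Delta_{F_{t_1}}(\bm{e}_i, \yh).
\]

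First I would exploit the linearity of the Bregman divergence in its first argument. Since $\sum_i \eta_i \bm{e}_i = \boldsymbol{\eta}$, the standard Bregman identity yields
\[
\sum_{i=1}^k \eta_i\, \Delta_{F_{t_1}}(\bm{e}_i, \yh) = \Delta_{F_{t_1}}(\boldsymbol{\eta}, \yh) + C(\boldsymbol{\eta}), \quad C(\boldsymbol{\eta}) = \sum_{i=1}^k \eta_i\, \Delta_{F_{t_1}}(\bm{e}_i, \boldsymbol{\eta}),
\]
where $C(\boldsymbol{\eta})$ is independent of $\ah$. Because $F_{t_1}$ is strictly convex (Lemma~\ref{lem:strong}), $\Delta_{F_{t_1}}(\boldsymbol{\eta}, \yh) \ge 0$ with equality iff $\yh = \boldsymbol{\eta}$, so the conditional risk is uniquely minimized, over predicted distributions, when $\yh = \boldsymbol{\eta}$. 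This is exactly the strict properness established in the previous subsection, now read pointwise in $\x$.

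Second I would record two structural facts about $\fs^*_{t_2}$. (i) It is surjective onto the relative interior of $S^k$: for $\boldsymbol{\eta}$ with all $\eta_i > 0$ the choice $\ah = \log_{t_2}\boldsymbol{\eta}$ forces $\lambda_{t_2}(\ah) = 0$ and $\fs^*_{t_2}(\ah) = \boldsymbol{\eta}$, so the minimizer $\yh = \boldsymbol{\eta}$ is attainable and $\inf_{\ah} R(\ah \mid \boldsymbol{\eta}) = C(\boldsymbol{\eta})$. (ii) It is order-preserving, since $\exp_{t_2}$ is monotonically increasing and $\lambda_{t_2}(\ah)$ is a common shift, whence $\argmax_i \hat{y}_i = \argmax_i \hat{a}_i$; at the risk minimizer $\yh = \boldsymbol{\eta}$ this agrees with the Bayes decision $\argmax_i \eta_i$. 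For any $\ah$ whose decoded label disagrees with the (assumed unique) Bayes label, the induced $\yh$ lies in the closed set $\{\vb \in S^k : \argmax_i v_i \neq \argmax_i \eta_i\}$, which excludes $\boldsymbol{\eta}$; as $\Delta_{F_{t_1}}(\boldsymbol{\eta}, \cdot)$ is continuous and strictly positive away from $\boldsymbol{\eta}$, it attains a strictly positive minimum over this compact region. Hence the infimum of the conditional risk over decisions disagreeing with Bayes strictly exceeds $C(\boldsymbol{\eta}) = \inf_{\ah} R(\ah \mid \boldsymbol{\eta})$, which is precisely classification-calibration; taking expectation over $\Puksub(\x)$ and applying the calibration-implies-consistency result of~\cite{bayes-multi} gives Bayes-risk consistency.

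The main obstacle is the final step, where one must verify that the mismatch $t_1 < t_2$ --- which renders $\loss$ non-convex in $\ah$ --- does not spoil calibration. I expect this to be handled cleanly by parametrizing the minimization through the bijective transfer map, so that the object controlling calibration is the positive-definite divergence $\Delta_{F_{t_1}}(\boldsymbol{\eta}, \yh)$ in the probability variable $\yh$, where the non-convexity in $\ah$ is immaterial. The only genuine care needed is the boundary behavior when $\boldsymbol{\eta}$ has tied maxima, in which case either a calibrated tie-break is adopted or consistency holds vacuously.
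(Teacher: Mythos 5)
Your proof is correct, and its core mechanism coincides with the paper's: show that the conditional-risk minimizer forces the tempered softmax output to equal $\bm{\eta}$, then use monotonicity/order-preservation of $\exp_{t_2}$ (equivalently, that $-\log_{t_1}$ is decreasing) to conclude that the decoded label agrees with $\argmax_i \eta_i$. Where you diverge is in rigor and in the strength of the conclusion. The paper simply \emph{asserts} that the minimizer $\ah^\star$ of $R(\bm{\eta}, \bm{l}(\ah))$ satisfies $\eta_i = \exp_{t_2}(\hat{a}^\star_i - \lambda_{t_2}(\ah^\star))$; you justify this via the decomposition $\sum_i \eta_i \Delta_{F_{t_1}}(\bm{e}_i, \yh) = \Delta_{F_{t_1}}(\bm{\eta}, \yh) + C(\bm{\eta})$ together with nonnegativity/strictness of the Bregman divergence, and you additionally verify attainability by exhibiting $\ah = \log_{t_2}\bm{\eta}$ with $\lambda_{t_2} = 0$ (surjectivity of $\fs^*_{t_2}$ onto the interior of $S^k$), a point the paper leaves implicit and which, strictly speaking, its proof also needs since for $t_2>1$ boundary distributions $\bm{\eta}$ are not exactly realizable. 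Beyond that, you prove more than the paper's definition demands: the paper's notion of Bayes-risk consistency only requires the argmin-of-losses condition \emph{at} the exact minimizer, whereas your compactness argument establishes a strict risk gap between Bayes-disagreeing activations and the infimum, i.e.\ classification calibration in the sense of \cite{bayes-multi}, which then yields consistency for minimizing \emph{sequences} rather than exact minimizers. The cost of the stronger route is the need to handle the boundary/tied-maximum cases, which you correctly flag rather than resolve; the paper's shorter argument silently carries the same caveat.
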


\vspace{-0.2cm}
\section{Experiments}
\vspace{-0.1cm}
We demonstrate the practical utility of the bi-tempered logistic loss function
on a wide variety of image classification tasks. For moderate size experiments, we use
MNIST dataset of handwritten digits~\cite{mnist} and CIFAR-100, which contains real-world images from 100 different classes
\cite{cifar100}. We use  ImageNet-2012~\cite{imagenet} for large scale image
classification, having 1000 classes. All experiments are carried out using the TensorFlow~\cite{tensorflow} framework. We use P100 GPU's for small scale experiments and Cloud TPU-v2 for larger scale ImageNet-2012 experiments. An implementation of the bi-tempered logistic loss is available online at: \url{https://github.com/google/bi-tempered-loss}.

\vspace{-0.2cm}

\begin{figure*}[t]
\vspace{-1cm}
\hspace{-2cm}
\footnotesize
\begin{tabular}{m{0.05\textwidth} m{0.32\textwidth} m{0.32\textwidth} m{0.32\textwidth}}
  & \multicolumn{1}{c}{\quad\quad Training Set (Noise-free)}
  & \multicolumn{1}{c}{\quad\quad Training Set (Noisy)}
    &  \multicolumn{1}{c}{\quad\quad Test set (Noise-free)}\\
    \multicolumn{1}{c}{\rotatebox[origin=c]{90}{Logistic}}\hspace{-0.8cm} &
\includegraphics[height=0.3\textwidth]{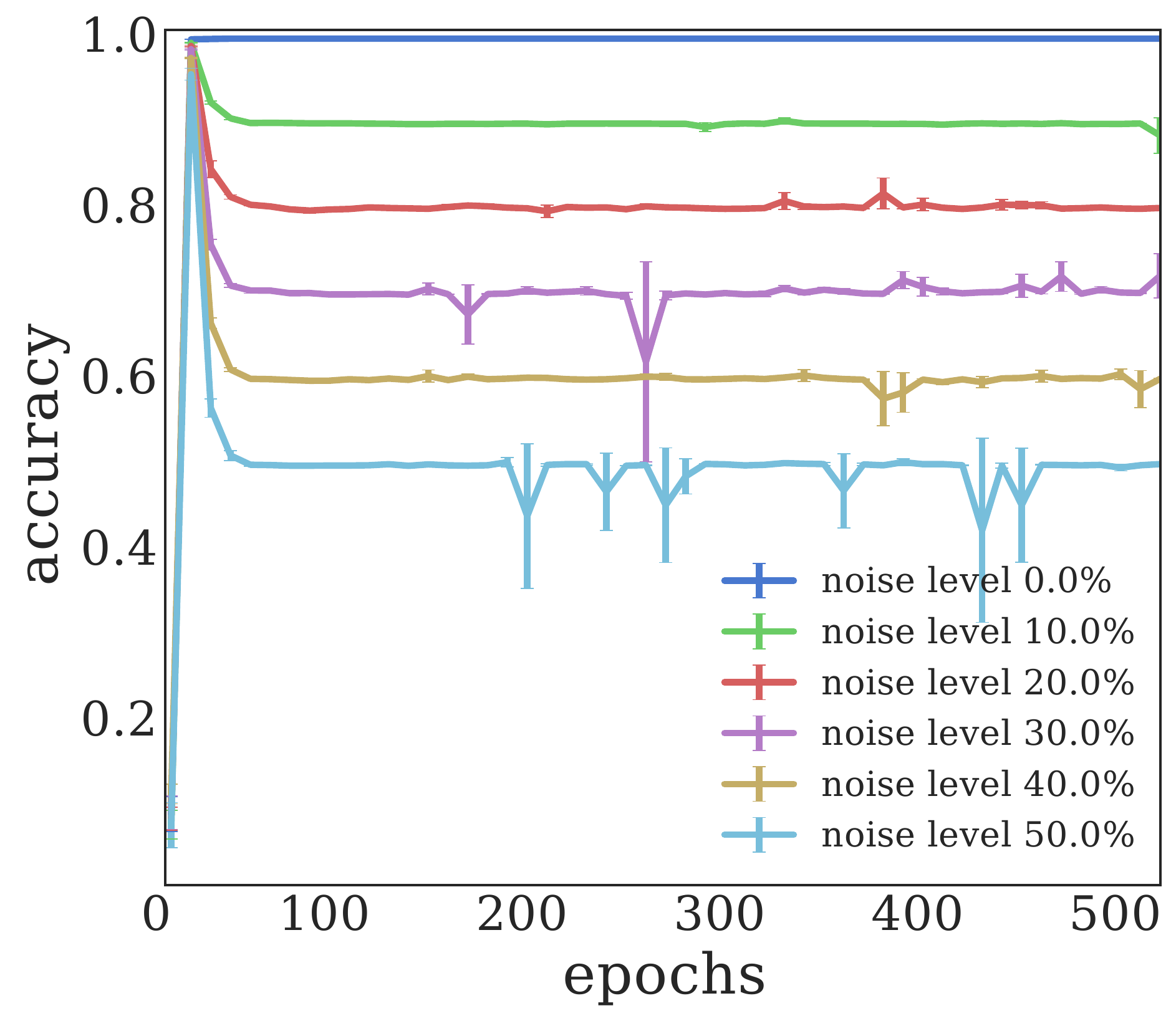} &
\includegraphics[height=0.3\textwidth]{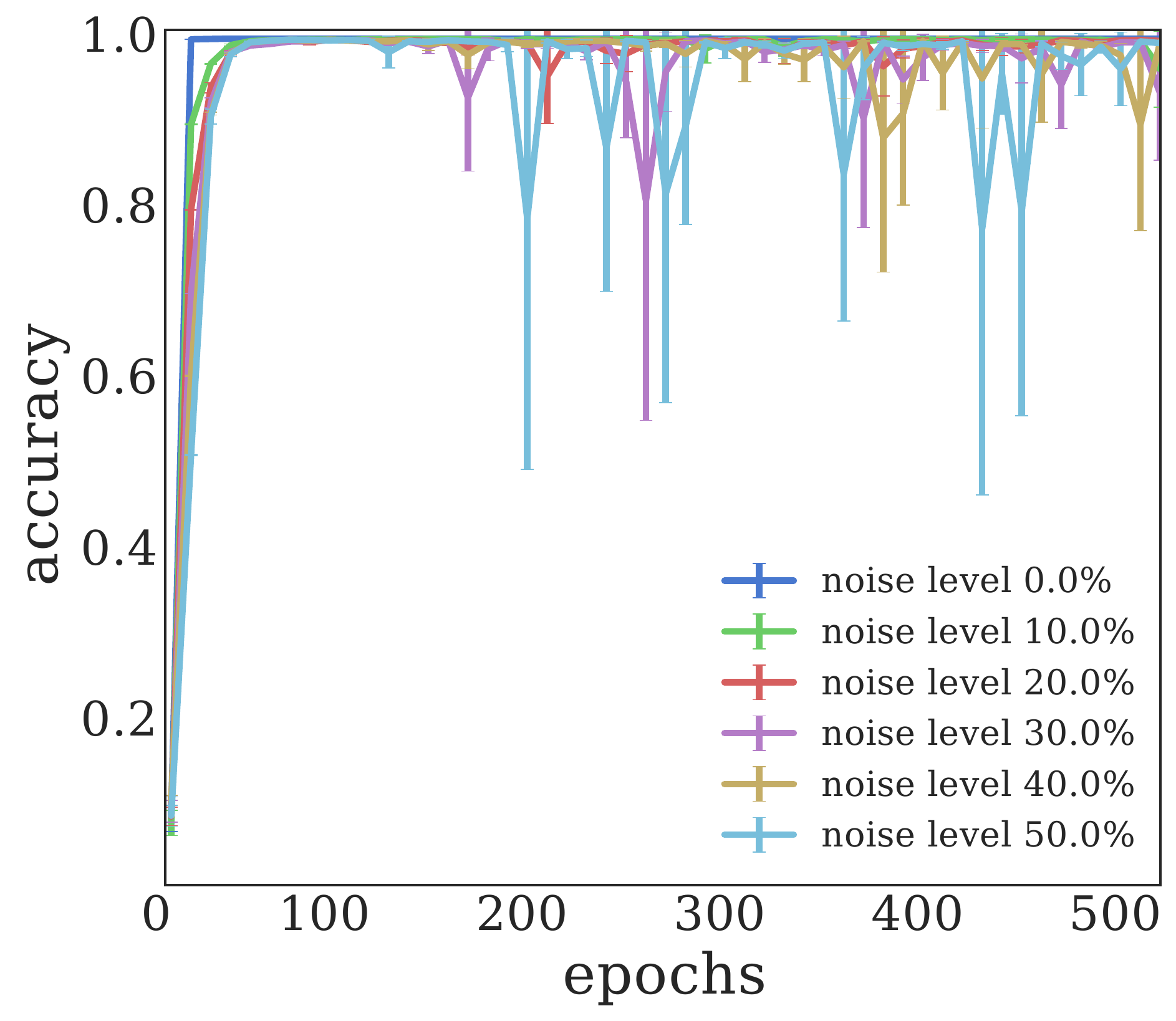} &
\includegraphics[height=0.3\textwidth]{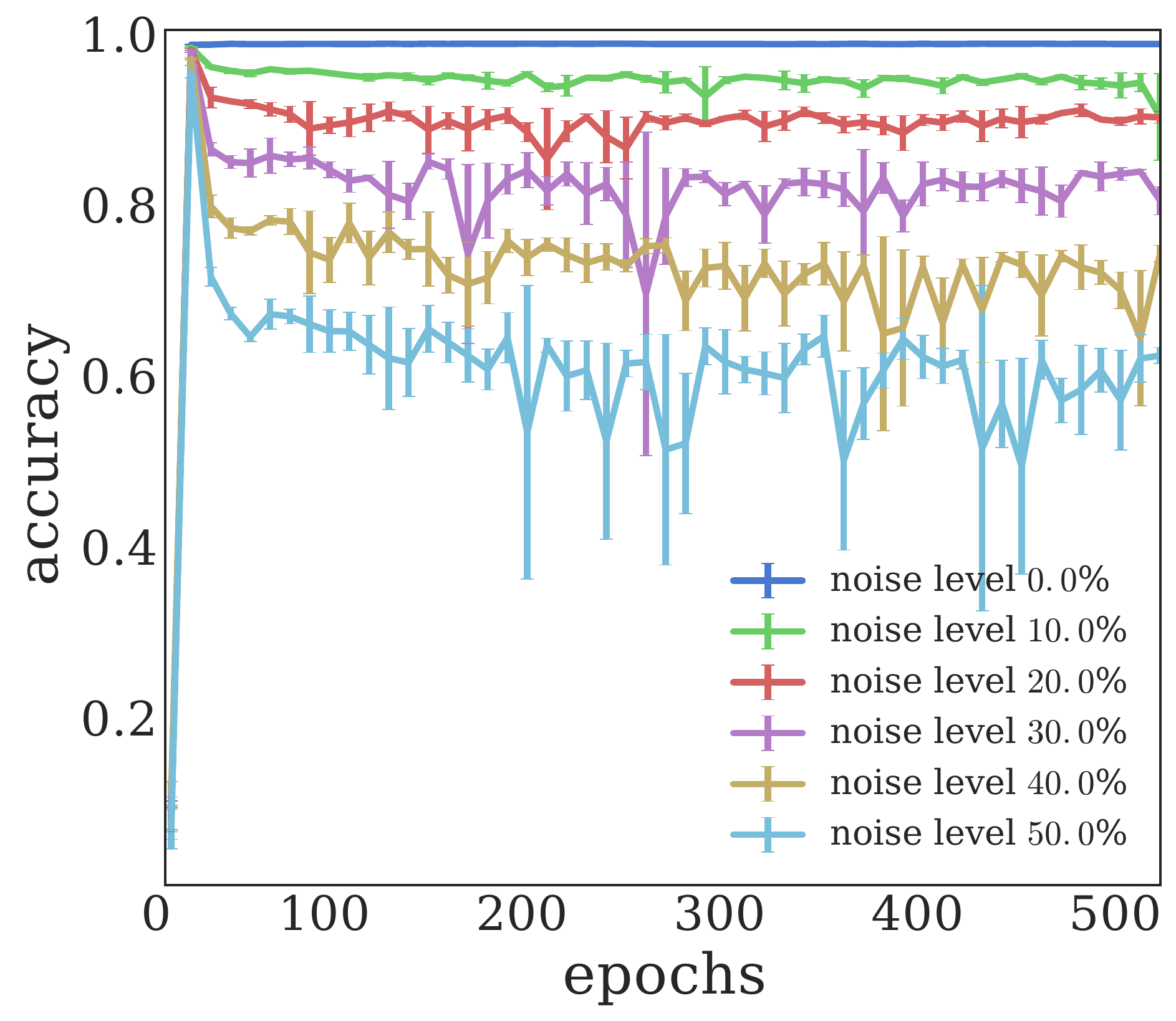}\\
\multicolumn{1}{c}{\rotatebox[origin=c]{90}{Bi-Tempered}}\hspace{-0.8cm} &
\includegraphics[height=0.3\textwidth]{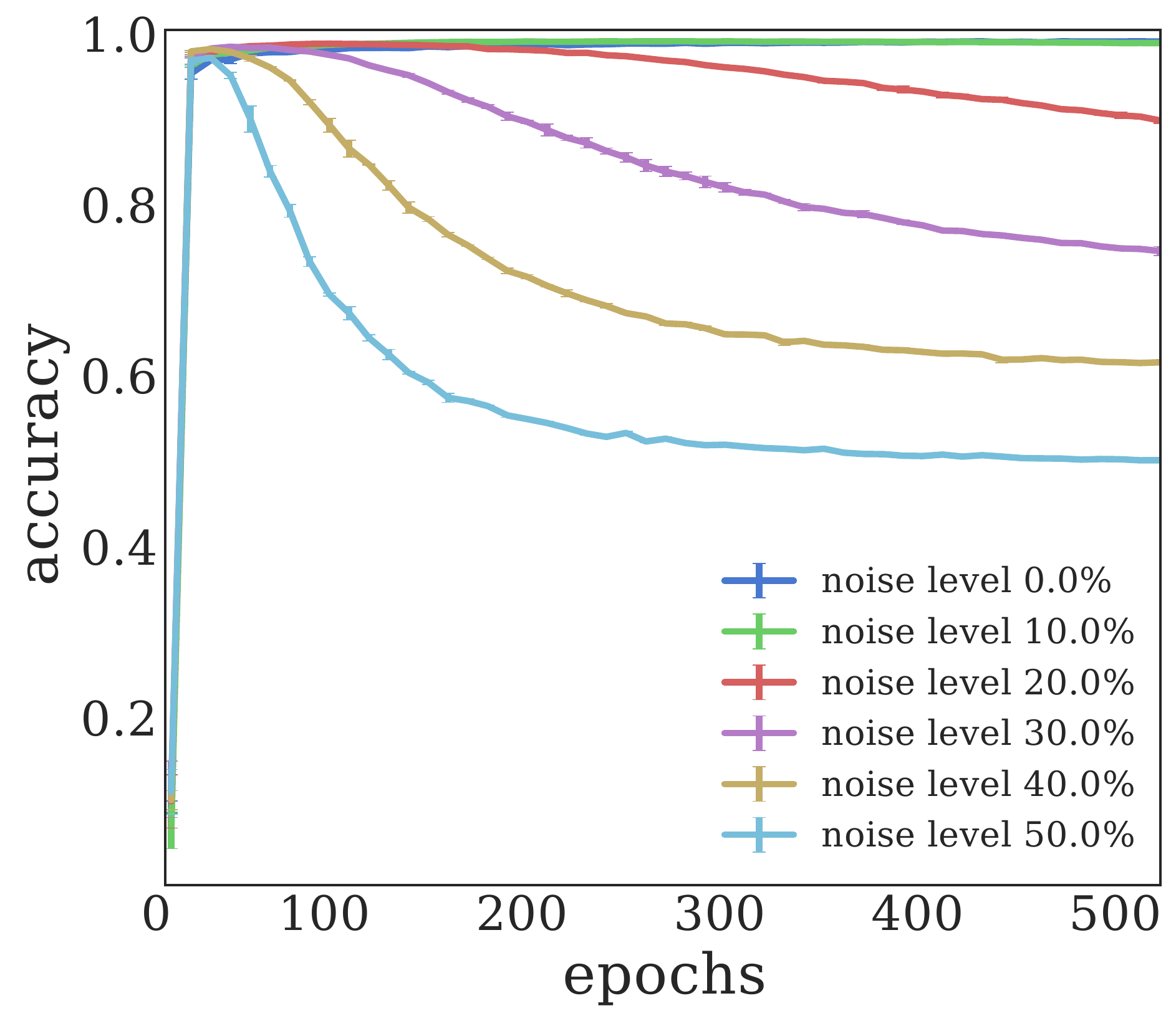} &
\includegraphics[height=0.3\textwidth]{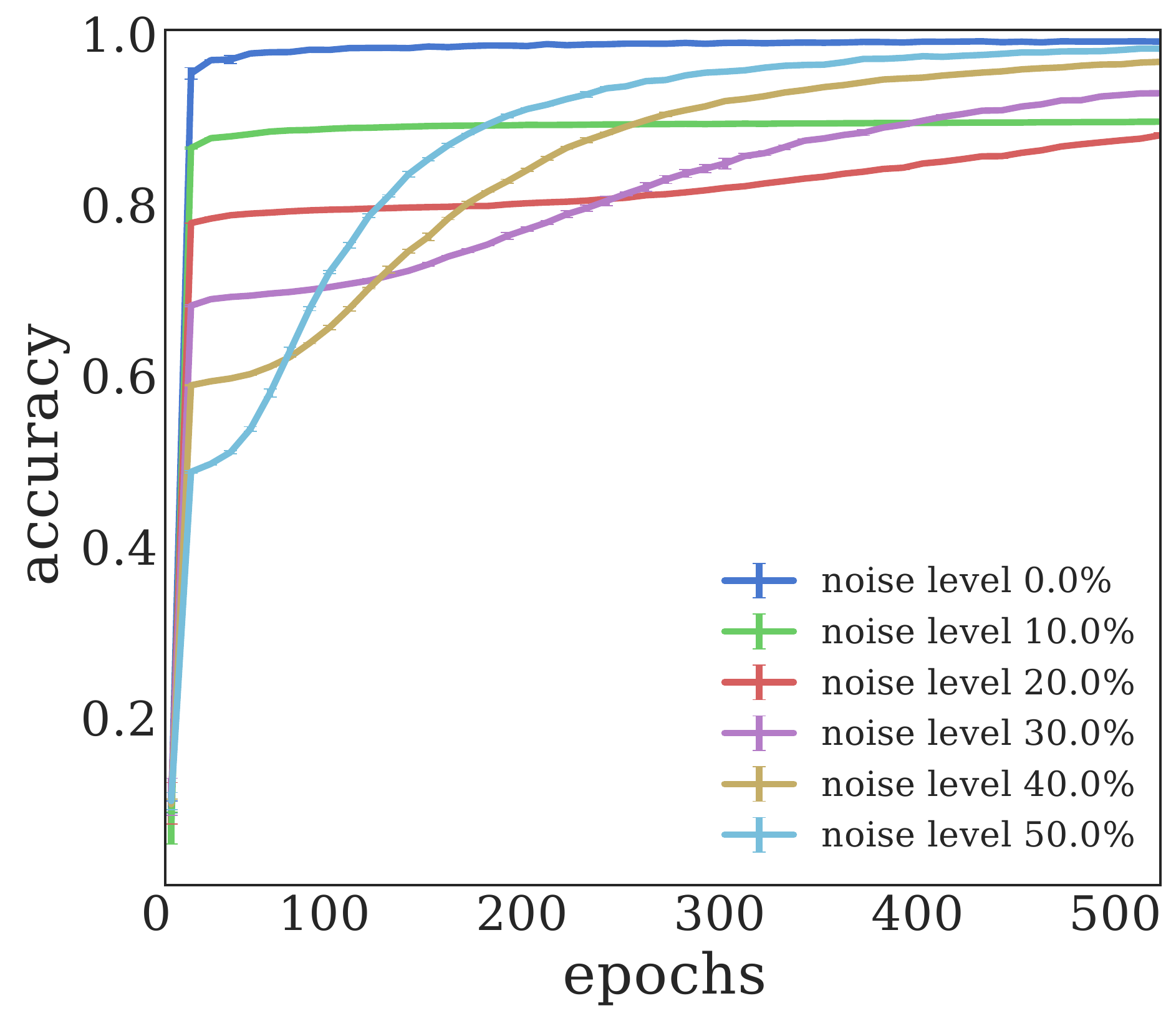} &
\includegraphics[height=0.3\textwidth]{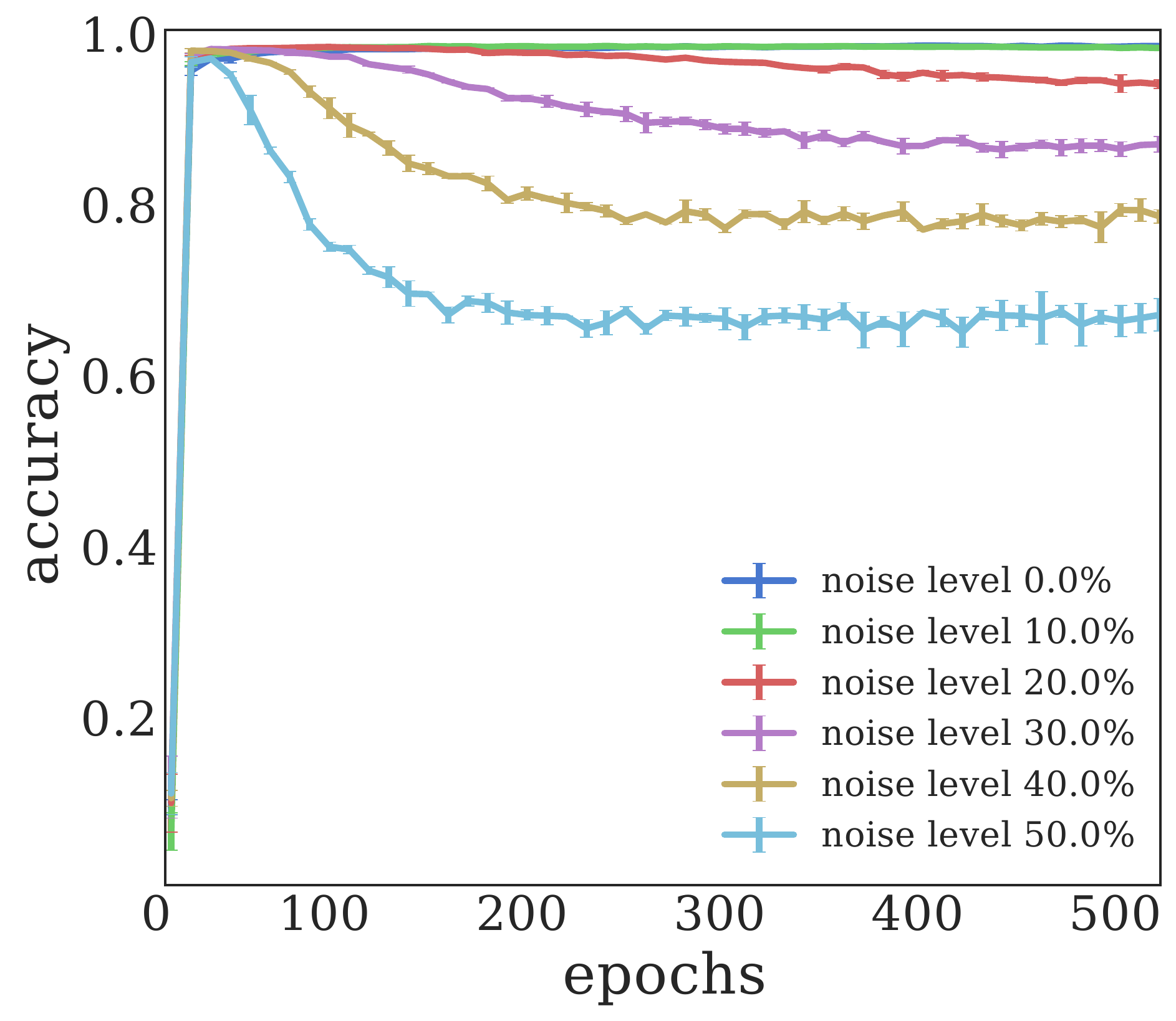}\\
& \multicolumn{1}{c}{{\quad\quad\quad (a)}} & \multicolumn{1}{c}{{\quad\quad \quad(b)}} &  \multicolumn{1}{c}{{\quad\quad\quad (c)}}\\
\end{tabular}
\caption{Top-1 accuracy of the models trained using the logistic loss (top) and the bi-tempered loss with $(t_1, t_2) = (0.5, 4.0)$ (bottom) on the noisy MNIST dataset: accuracy on (a) noise-free training set, (b)  noisy training set, (c) and noise-free test set. Initially, both models provide better generalization but gradually overfit to the label noise. However, the overfitting for the logistic loss happens much earlier during the optimization. The variance of the model is also much higher for the logistic loss. The bi-tempered loss provides better generalization accuracy overall.
    }\label{fig:accuracy}
    \vspace{-1mm}
\end{figure*}
\subsection{Corrupted labels experiments}
For our moderate size datasets, i.e. MNIST and CIFAR-100, we introduce noise by artificially corrupting a fraction of the labels and producing a new set of labels for each noise level. For all experiments, we compare our bi-tempered loss function against the logistic loss. For MNIST, we use a CNN with two convolutional layers of size $32$ and $64$ with a mask size of $5$, followed by two fully-connected layers of size $1024$ and $10$. We apply max-pooling after each convolutional layer with a window size equal to $2$ and use dropout during training with keep probability equal to $0.75$. We use the AdaDelta optimizer~\cite{adadelta} with $500$ epochs and batch size of $128$ for training.
For CIFAR-100, we used a Resnet-56 \cite{resnet} model without batch norm from \cite{HardtM16} with SGD + momentum optimizer trained for 50k steps with batch size of 128  and use the standard learning rate stair case decay schedule. For both experiments, we report the test accuracy of the checkpoint which yields the highest accuracy on an identically label-noise corrupted validation set. We search over a set of learning rates for each experiment. For both experiments, we exhaustively search over 
a number of temperatures within the range $[0.5, 1)$ and $(1.0, 4.0]$ for $t_1$ and $t_2$, respectively. The results are presented in Table \ref{tab:synthetic_results} where we report the top-1 accuracy on a clean test set. As can be seen, the bi-tempered loss outperforms the logistic loss for all noise levels (including the noise-free case for CIFAR-100). Using our bi-tempered loss function the model is able to continue to perform well even for high levels of label noise whereas the accuracy of the logistic loss drops immediately with a much smaller level of noise. Additionally, in Figure~\ref{fig:accuracy}, we illustrate the top-1 accuracy on the noise-free and noisy training set, as well the accuracy on the (noise-free) test set for both losses as a function of number of epochs. As can be seen from the figure, initially both models yield a relatively higher test accuracy, but gradually overfit to the label noise in the training set over time. The overfitting to the noise deteriorates  the generalization capacity of the models. However, overfitting to the noise happens earlier in the training and is much severe in case of the logistic loss. As a result, the final test accuracy (after 500 epochs) is comparatively much lower than the bi-tempered loss as the noise level increases. Finally, note that the variance of the model is also considerably higher for the logistic loss. This confirms that the bi-tempered loss results in more stable models when the data is noise-corrupted.

\begin{table}[h]
{\small
\begin{center}
\begin{tabular}{llcccccc}
\toprule
\multirow{2}{*}{Dataset} & \multirow{2}{*}{Loss} & \multicolumn{6}{c}{Label Noise Level}\\
\cmidrule(lr){3-8}
& & 0.0 & 0.1 & 0.2 & 0.3 & 0.4 & 0.5\\
\midrule
\midrule
\multirow{2}{*}{MNIST} &
\multirow{1}{*}{Logistic} & \textbf{99.40} & 98.96 & 98.70 & 98.50 & 97.64 & 96.13\\
                    \cmidrule(lr){3-8}
                    & \multirow{1}{*}{Bi-Tempered ($0.5, 4.0$)}   & 99.24 & \textbf{99.13} &  \textbf{99.02} &  \textbf{98.62} & \textbf{98.56} & \textbf{97.69} \\
                                                            
\midrule
\multirow{2}{*}{CIFAR-100}  & \multirow{1}{*}{Logistic} & 74.03 & 69.94 & 66.39 & 63.00 & 53.17 & 52.96\\
                    \cmidrule(lr){3-8}
                       & \multirow{1}{*}{Bi-Tempered ($0.8$, $1.2$)}  & \textbf{75.30} & \textbf{73.30} &  \textbf{70.69} &  \textbf{67.45} & \textbf{62.55}  & \textbf{57.80}\\
                                                    
\bottomrule
\end{tabular}
\end{center}
}
\caption{Top-1 accuracy on a clean test set for MNIST and CIFAR-100 datasets where a fraction of the training labels are corrupted.}
\label{tab:synthetic_results}
\end{table}

\vspace{-0.5cm}
\subsection{Large scale experiments}

We train state-of-the-art Resnet-18 and Resnet-50 models on the ImageNet-2012 dataset. Note that the ImageNet-2012 dataset is inherently noisy due to some amount of mislabeling. We train on a 4x4 CloudTPU-v2 device with a batch size of 4096. All experiments were trained for 180 epochs, and use the SGD + momentum optimizer with staircase learning rate decay schedule. The results are presented in Table~\ref{tab:imagenet_results}. 
For both architectures we see a significant gain of the
robust bi-tempered loss method in top-1 accuracy. 

\begin{table}[h]
{\small
\begin{center}
\begin{tabular}{ccc}
\toprule
Model & Logistic  & Bi-tempered (0.9,1.05)\\
\midrule
\multirow{1}{*}{Resnet18}  & $71.333 \pm 0.069$  &  $\bm{71.618} \pm 0.163$\\
\midrule
\multirow{1}{*}{Resnet50} &  $76.332 \pm 0.105$ &  $\bm{76.748} \pm 0.164$ \\
\bottomrule
\end{tabular}
\end{center}
}
\caption{Top-1 accuracy on ImageNet-2012 with Resnet-18 and 50 architectures.}
\label{tab:imagenet_results}
\end{table}




\section{Conclusion and Future Work}
Neural networks on large standard datasets have been
optimized along with a large variety of variables such as
architecture, transfer function, choice of optimizer, and label smoothing to name just a few.
We proposed a new variant by training the network with
tunable loss functions.
We do this by first developing convex loss functions based on
temperature dependent logarithm and exponential functions. 
When both temperatures are the same, then a construction
based on the notion of ``matching loss'' leads to loss
functions that are convex in the last layer.
However by letting the temperature of the new tempered softmax function
be larger than the temperature of the tempered log function used in
the  divergence, we construct tunable losses that are non-convex in the last layer.
Our construction remedies two issues simultaneously: 
we construct bounded tempered loss functions that can handle large-margin outliers and introduce heavy-tailedness in our
new tempered softmax function that seems to handle small-margin mislabeled examples.
At this point, we simply took a number of benchmark datasets and networks for these datasets that have been heavily
optimized for the logistic loss paired with vanilla softmax
and simply replaced the loss in the last layer by our new
construction. By simply trying a number of temperature
pairs, we already achieved significant improvements.
We believe that with a systematic ``joint optimization'' of all
commonly tried variables, significant further improvements
can be achieved. This is of course a more long-term goal.
We also plan to explore the idea of annealing the temperature parameters over the training process.

\section*{Acknowledgement}
We would like to thank Jerome Rony for pointing out the errors in the MNIST experiments.

%


\bibliographystyle{plain}
\bibliography{refs}

\newpage
\appendix

\section{An Iterative Algorithm for Computing the Normalization}
\label{a:alg}
\begin{algorithm}[h]
    \caption{Iterative algorithm for computing
    $\lambda_t(\a)$ (from~\cite{ourtsallis})}
\label{alg:iterative}
\begin{algorithmic}
\STATE {\bfseries Input:} Vector of activations $\a$, temperature $t > 1$
\STATE $\mu \gets \max(\a)$
\STATE $\tilde{\a} \gets \a-\mu$
\WHILE{$\tilde{\a}$ not converged}
\STATE $Z(\tilde{\a})\gets \sum_{i=1}^k \exp_{t}(\tilde{a}_i)$
\STATE$\tilde{\a}\gets Z(\tilde{\a})^{1-t} (\a-\mu\,\one)$
\ENDWHILE
    \STATE {\bfseries Return:} $\lambda_t(\a)\gets -\log_t\frac{1}{Z(\tilde{\a})}+\mu$
\end{algorithmic}
\end{algorithm}

\section{Strong Convexity and Smoothness}
\label{app:strong}
The following material for strong convexity and strong smoothness are adopted from~\cite{olsurvey}.
\begin{definition}[$\sigma$-Strong Convexity] 
    A continuous function $F$ is $\sigma$-strongly convex w.r.t.~the norm $\Vert\cdot\Vert$ over the convex set $\Set$ if $\Set$ is contained in the domain of $F$ and for any $\ub, \, \vb\in \Set$, we have
    \[
        F(\vb) \geq F(\ub) + \nabla F(\ub)\cdot(\vb - \ub) + \frac{\sigma}{2}\,\Vert\vb - \ub\Vert^2\, .
    \]
\end{definition}
\begin{lemma}
    Assume $F$ is twice differentiable. Then $F$ is $\sigma$-strongly convex if
    \[
        \big(\nabla^2 F(\ub)\,\vb\big)\cdot\vb \geq \sigma\,\Vert \vb \Vert^2
,\,\,\,\forall \ub,\, \vb \in \Set\,  .
    \]
\end{lemma}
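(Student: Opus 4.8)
The plan is to reduce this multivariate statement to a one-dimensional Taylor expansion along the line segment joining $\ub$ and $\vb$, and then to use the assumed lower bound on the Hessian quadratic form to control the second-order remainder. Because the definition of strong convexity requires $\Set$ to be convex, the entire segment $\{\ub + \tau(\vb - \ub) : \tau \in [0,1]\}$ stays in $\Set$, so the Hessian hypothesis is available at every point along it.

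Concretely, I would fix $\ub, \vb \in \Set$ and introduce the scalar function $g(\tau) \coloneqq F\big(\ub + \tau(\vb - \ub)\big)$ on $[0,1]$. Two applications of the chain rule give $g'(\tau) = \nabla F\big(\ub + \tau(\vb - \ub)\big)\cdot(\vb - \ub)$ and $g''(\tau) = \big(\nabla^2 F(\ub + \tau(\vb - \ub))\,(\vb - \ub)\big)\cdot(\vb - \ub)$. Applying the hypothesis at the point $\ub + \tau(\vb - \ub) \in \Set$ in the direction $\vb - \ub$ then yields $g''(\tau) \geq \sigma\,\Vert \vb - \ub\Vert^2$ for every $\tau \in [0,1]$.

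Next I would invoke Taylor's theorem with the integral form of the remainder, $g(1) = g(0) + g'(0) + \int_0^1 (1-\tau)\,g''(\tau)\,d\tau$, and lower-bound the remainder using $g''(\tau) \geq \sigma\,\Vert\vb - \ub\Vert^2$ together with $\int_0^1 (1-\tau)\,d\tau = \tfrac{1}{2}$. Unwinding $g(1) = F(\vb)$, $g(0) = F(\ub)$, and $g'(0) = \nabla F(\ub)\cdot(\vb - \ub)$ reproduces exactly the defining inequality $F(\vb) \geq F(\ub) + \nabla F(\ub)\cdot(\vb - \ub) + \tfrac{\sigma}{2}\Vert\vb - \ub\Vert^2$ of $\sigma$-strong convexity.

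There is no genuine obstacle here, as this is the classical second-order sufficient condition; the only point needing care is that the Hessian bound is applied in the direction $\vb - \ub$ at interior points of the segment, which is legitimate precisely because convexity of $\Set$ keeps those points inside $\Set$. One could equivalently use the Lagrange/mean-value form of the remainder, writing $g(1) - g(0) - g'(0) = \tfrac{1}{2}\,g''(\xi)$ for some $\xi \in (0,1)$; the integral form is marginally cleaner since it avoids naming the mean-value point.
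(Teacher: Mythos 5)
The paper itself gives no proof of this lemma: it is imported, together with the surrounding definitions, from the cited online-learning survey. So there is no in-paper argument to compare against, and your job was to supply the standard one — which you do: restricting $F$ to the segment, computing $g''(\tau)=\big(\nabla^2 F(\ub+\tau(\vb-\ub))\,(\vb-\ub)\big)\cdot(\vb-\ub)$, lower-bounding it, and integrating the remainder is exactly the classical second-order sufficiency argument.

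There is, however, one genuine mismatch between your argument and the statement as written, and your own parenthetical caveat resolves it incorrectly. The hypothesis quantifies the \emph{direction} vector over $\Set$: it provides $\big(\nabla^2 F(\ub)\,\vb\big)\cdot\vb \geq \sigma\Vert\vb\Vert^2$ only for $\vb\in\Set$. Your proof invokes this bound with the direction $\vb-\ub$, which in general is \emph{not} an element of $\Set$ (in the paper's own application $\Set$ lies in the positive orthant, so differences of its points typically have negative coordinates). Convexity of $\Set$ — the point you flag as the only one needing care — guarantees that the base points $\ub+\tau(\vb-\ub)$ at which the Hessian is evaluated stay in $\Set$; it says nothing about the direction argument of the quadratic form, and that is precisely where the hypothesis is being stretched. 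The gap is not cosmetic: under the literal quantification the lemma is false. Take $\Set=\{(x_1,x_2): x_1\geq 10,\ \vert x_2\vert \leq 1\}$ and $F(x_1,x_2)=\tfrac{1}{2}x_1^2-\tfrac{1}{2}x_2^2$; then for every $\vb\in\Set$ one has $\big(\nabla^2 F\,\vb\big)\cdot\vb = v_1^2-v_2^2 \geq \tfrac{1}{2}\Vert\vb\Vert_2^2$ (since $v_1^2\geq 100$ and $v_2^2\leq 1$), yet $F$ is concave along vertical segments of $\Set$, hence not even convex there. The repair is to read (or restate) the hypothesis with $\vb$ ranging over all of $\R^k$ — which is how the source survey states it — or at least over all differences of points of $\Set$; under that reading your proof is complete as written. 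Nothing downstream in the paper is damaged by this: in the proof of Lemma 1 the bound $\sum_i v_i^2/u_i^{t} \geq B^{-t}\Vert\vb\Vert_{2-t}^2$ in fact holds for arbitrary directions $\vb$ by a Hölder-type argument, so the strengthened hypothesis is available where it is needed.
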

\begin{lemma}
    \label{lem:suff}
Let $F$ be a $\sigma$-strongly convex function over the  non-empty convex set $\Set$. For all $\ub,\, \vb\in \Set$, we have
    \[
        \frac{\sigma}{2}\, \Vert\ub - \vb\Vert^2\,\leq\, \Delta_F(\vb, \ub)\,  .
    \]
\end{lemma}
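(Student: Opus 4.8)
The plan is to observe that the desired inequality is nothing more than a restatement of the definition of $\sigma$-strong convexity, once its left-hand side is recognized as a Bregman divergence. First I would write out the strong-convexity condition from the definition preceding the lemma, applied with base point $\ub$ and comparison point $\vb$:
\[
F(\vb) \geq F(\ub) + \nabla F(\ub)\cdot(\vb - \ub) + \frac{\sigma}{2}\,\Vert\vb - \ub\Vert^2.
\]
Both $\ub$ and $\vb$ lie in $\Set$, which by hypothesis is contained in the domain of $F$, so this inequality is valid.

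Next I would move the first two terms on the right-hand side over to the left, obtaining
\[
F(\vb) - F(\ub) - \nabla F(\ub)\cdot(\vb - \ub) \geq \frac{\sigma}{2}\,\Vert\vb - \ub\Vert^2.
\]
The left-hand side is \emph{exactly} the Bregman divergence $\Delta_F(\vb, \ub) = F(\vb) - F(\ub) - (\vb - \ub)\cdot \nabla F(\ub)$ from the definition in Section~\ref{sec:bregman}, with link function $f = \nabla F$ evaluated at the base point $\ub$. Finally, since every norm is symmetric, $\Vert\vb - \ub\Vert = \Vert\ub - \vb\Vert$, so the right-hand side equals $\frac{\sigma}{2}\,\Vert\ub - \vb\Vert^2$, which is the claimed bound.

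There is essentially no obstacle here: the lemma is the definition of strong convexity read through the lens of the Bregman divergence. The only points requiring minor care are keeping the argument order of $\Delta_F$ consistent — the gradient in the divergence is evaluated at its \emph{second} slot, which must match the base point $\ub$ at which strong convexity is anchored — and invoking the symmetry of the norm to pass from $\Vert\vb - \ub\Vert$ to $\Vert\ub - \vb\Vert$.
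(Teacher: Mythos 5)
Your proof is correct: given the paper's definition of $\sigma$-strong convexity (the first-order inequality anchored at $\ub$), the lemma follows by a direct rearrangement, and you correctly note the one point requiring care, namely that the gradient in $\Delta_F(\vb,\ub)$ is evaluated at the second argument $\ub$, matching the base point of the strong-convexity inequality. The paper itself gives no proof of this lemma (it is quoted from the cited online-learning survey), and your argument is precisely the intended immediate derivation.
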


\begin{proof}[\textbf{Proof of Lemma~\ref{lem:strong}.}]
We have $\nabla^2\, F(\ub) = \text{diag}(\ub^{-t})$. Applying Lemma~\ref{lem:suff}, note that the function 
$$
(\nabla^2 F_t(\ub)\cdot\vb)\cdot \vb = \sum_i \frac{v_i^2}{u_i^{t}}\, ,
$$
is unbounded over the set $\Set = \{\vb \in \R_+^d:\, \Vert \vb \Vert_{2-t} \leq B\}$ and the minimum happens at the boundary $\{\Vert \vb \Vert_{2-t} = B\}$.
$$
\min_{\vb} \sum_i \frac{v_i^2}{u_i^{t}} + \gamma\, (\sum_i v_i^{2-t} - 1)\, \Rightarrow \,\vb = B\,\frac{\ub}{\Vert \ub \Vert_{2-t}}\, ,
$$
where $\gamma$ is the Lagrange multiplier. Plugging in the solution yields $\sum_i \frac{v_i^2}{u_i^{t}} \geq \frac{1}{B^t}\, \Vert \vb \Vert^2_{2-t}$.
\end{proof}

\begin{definition}[$\sigma$-Strong Smoothness]
A function differentiable function $G$ is $\sigma$-strongly smooth w.r.t. the norm $\Vert\cdot\Vert$ if 
    \[
        \Delta_G(\vb, \ub) \leq \frac{\sigma}{2}\, \Vert\vb - \ub\Vert^2\, .
    \]
\end{definition}

\begin{lemma}
    Let $F$ be a closed and convex function. Then $F$ is $\sigma$-strongly convex w.r.t. the $\vert\cdot\Vert$ if and only if $F^*$, the dual of $F$, is $\frac{1}{\sigma}$-strongly smooth w.r.t. the dual norm $\Vert\cdot\Vert_*$.
\end{lemma}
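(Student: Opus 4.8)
The plan is to route everything through the duality relations assembled in Section~\ref{sec:bregman}, in particular $\nabla F^* = (\nabla F)^{-1}$ (which holds because $F$ is closed and convex, so $F^{**}=F$), together with a single \emph{conjugate Bregman identity}. First I would establish by a direct Fenchel--Young computation that the two Bregman divergences agree on dual pairs,
\[
\Delta_{F^*}\big(\nabla F(\vb),\,\nabla F(\ub)\big) = \Delta_F(\ub,\vb)\,,
\]
obtained by inserting $F^*(\nabla F(\ub)) = \ub\cdot\nabla F(\ub) - F(\ub)$ (and the analogue at $\vb$) into the definition of $\Delta_{F^*}$ and cancelling the cross terms. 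This identity is what lets me transfer a bound on $\Delta_{F^*}$ at dual points into a bound on $\Delta_F$ at the corresponding primal points, and it is the workhorse of the converse direction.

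For the forward direction (strong convexity $\Rightarrow$ strong smoothness of the conjugate), I would add the two $\sigma$-strong-convexity inequalities at $\ub$ and $\vb$ to obtain the strong-monotonicity bound $(\nabla F(\ub) - \nabla F(\vb))\cdot(\ub - \vb) \geq \sigma\,\Vert\ub - \vb\Vert^2$. Pairing this with H\"older's inequality $(\nabla F(\ub) - \nabla F(\vb))\cdot(\ub - \vb) \leq \Vert\nabla F(\ub) - \nabla F(\vb)\Vert_* \,\Vert\ub - \vb\Vert$ shows that $\nabla F^* = (\nabla F)^{-1}$ is $\tfrac{1}{\sigma}$-Lipschitz as a map from $\Vert\cdot\Vert_*$ to $\Vert\cdot\Vert$. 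Integrating $\nabla F^*$ along the segment joining two dual points $\a$ and $\bm{b}$ then yields
\[
\Delta_{F^*}(\bm{b}, \a) = \int_0^1 \big(\nabla F^*(\a + s(\bm{b}-\a)) - \nabla F^*(\a)\big)\cdot(\bm{b}-\a)\,\mathrm{d}s \;\leq\; \tfrac{1}{2\sigma}\,\Vert\bm{b} - \a\Vert_*^2\,,
\]
which is exactly $\tfrac{1}{\sigma}$-strong smoothness of $F^*$.

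For the converse I would first prove the gradient-norm estimate that a convex, $\tfrac{1}{\sigma}$-strongly smooth $G$ satisfies $\tfrac{\sigma}{2}\,\Vert\nabla G(\a) - \nabla G(\bm{b})\Vert^2 \leq \Delta_G(\a,\bm{b})$. This follows by applying the smoothness upper bound to the convex auxiliary function $H(\z) = G(\z) - \nabla G(\bm{b})\cdot\z$, which is minimised at $\bm{b}$, and minimising the resulting quadratic majorant over $\z$ (its minimum value is controlled by the conjugacy of the squared dual norm). Specialising to $G=F^*$ and writing $\ub = \nabla F^*(\a)$, $\vb = \nabla F^*(\bm{b})$, so that $\a = \nabla F(\ub)$ and $\bm{b} = \nabla F(\vb)$, the conjugate Bregman identity of the first step converts this estimate into $\tfrac{\sigma}{2}\Vert\ub - \vb\Vert^2 \leq \Delta_F(\vb,\ub)$, precisely the defining inequality of $\sigma$-strong convexity for $F$.

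The main obstacle is the converse, and specifically the gradient-norm estimate: unlike the forward direction it is not a one-line consequence of adding two defining inequalities, but requires exploiting the smoothness upper bound together with the minimiser property of the anchored function $H$ (i.e.\ convexity) on the same Bregman divergence. A secondary subtlety is keeping the norm/dual-norm pairings straight throughout---gradients of the $\Vert\cdot\Vert_*$-smooth conjugate are measured in $\Vert\cdot\Vert$, and the minimisation invokes that the dual of $\Vert\cdot\Vert_*$ is $\Vert\cdot\Vert$---and handling points of non-differentiability. The latter can be sidestepped entirely by phrasing both properties through their Bregman-divergence characterisations, $\Delta_F(\vb,\ub)\ge\tfrac{\sigma}{2}\Vert\vb-\ub\Vert^2$ and $\Delta_{F^*}(\bm{b},\a)\le\tfrac{1}{2\sigma}\Vert\bm{b}-\a\Vert_*^2$, so that both directions reduce to the single identity of the first step plus one Fenchel--Young inequality.
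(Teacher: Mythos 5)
Your proof is correct, but there is no paper proof to diverge from: the paper states this lemma without proof, importing it from the reference \cite{olsurvey} (the appendix opens by saying the strong-convexity/strong-smoothness material is ``adopted from'' that survey). Your argument is the standard duality proof from that literature. Forward: summing the two strong-convexity inequalities gives strong monotonicity of $\nabla F$, H\"older's inequality turns this into $\frac{1}{\sigma}$-Lipschitzness of $\nabla F^* = (\nabla F)^{-1}$ from $\Vert\cdot\Vert_*$ to $\Vert\cdot\Vert$, and integrating along a segment gives the smoothness bound. Converse: the anchored function $H(\z) = G(\z) - \nabla G(\bm{b})\cdot\z$, minimized at $\bm{b}$, combined with minimizing the quadratic majorant (whose minimum value is $H(\a) - \frac{\sigma}{2}\Vert\nabla H(\a)\Vert^2$, by conjugacy of $\frac{1}{2\sigma}\Vert\cdot\Vert_*^2$ and $\frac{\sigma}{2}\Vert\cdot\Vert^2$), yields the co-coercivity estimate $\frac{\sigma}{2}\Vert\nabla G(\a) - \nabla G(\bm{b})\Vert^2 \leq \Delta_G(\a, \bm{b})$, which your conjugate Bregman identity transports back to $\frac{\sigma}{2}\Vert\ub - \vb\Vert^2 \leq \Delta_F(\vb,\ub)$, i.e.\ strong convexity. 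All three computations check out, and that identity is exactly the duality relation the paper itself records as \eqref{eq:dual-div} and re-invokes in the proof of Corollary~\ref{cor:bound} (``using the duality of the Bregman divergences''), so your route is fully aligned with how the lemma is actually used in the paper.

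Two technical points would need tightening in a full write-up. First, $\nabla F^* = (\nabla F)^{-1}$ does not follow from closedness and convexity alone ($F^{**} = F$ gives only biconjugacy, not invertibility of the gradient map); in the forward direction it is the strong-convexity hypothesis that makes $F^*$ finite and differentiable everywhere with $\nabla F^*(\a) = \argmax_{\ub}\,(\ub\cdot\a - F(\ub))$, and in the converse it is the strong-smoothness hypothesis that supplies differentiability of $F^*$, so the inversion must be justified separately in each direction rather than asserted up front. Second, your Lipschitz and transfer steps a priori reach only pairs in the range of the relevant gradient map; for a merely closed convex (possibly non-differentiable) $F$ one runs the same inequalities with arbitrary selections from the subdifferential $\partial F$, for which strong monotonicity still holds --- your closing remark about recasting both properties through their Bregman-divergence characterizations gestures at this but does not by itself remove the issue, since Bregman divergences presuppose gradients. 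Neither caveat matters for the paper's application, where $F_t$ is differentiable and strictly convex on $\R_+^k$, so the proof is sound where it is needed.
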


\begin{proof}[\textbf{Proof of Corollary~\ref{cor:bound}.}]
Note that using the duality of the Bregman divergences, we have
\[
\Delta_{F_t}(\y, \yh) = \Delta_{F^*_t}(f_t(\yh),\, f_t(\y)) = \Delta_{F^*_t}(\log_t(\yh), \log_t(\y)) \,.
\]
Using the strong convexity of $F_t$ and strong smoothness of $F_t^*$, we have
\[
\frac{1}{2B^t}\, \Vert\y-\yh\Vert_{2-t}^2 \leq \Delta_{F_t}(\y, \yh) \leq \frac{B^t}{2}\, \Vert\log_t\y-\log_t\yh\Vert_{\frac{2-t}{1-t}}^2\,.
\]
Note that $\Vert\cdot\Vert_{2-t}$ and $\Vert\cdot \Vert_{\frac{2-t}{1-t}}$ are dual norms. Substituting the definition of $\log_t$ to the right-hand-side, we have
\[
\frac{B^t}{2}\Vert\log_t\y-\log_t\yh\Vert_{2-t}^2 = \frac{B^t}{2\,(1-t)^2}\, \Vert\y^{1-t} - \yh^{1-t}\Vert_{\frac{2-t}{1-t}}^2 \leq \frac{B^t}{2\,(1-t)^2}\, \big(2\, B^{1-t}\big)^2 = \frac{2\, B^{2-t}}{(1-t)^2}\, .
\]
\end{proof}

\section{Other Tempered Convex Functions}
\label{a:other}
We begin with a list of interesting temperature choices for
the convex function $F_t$ and its induced divergence:

\begin{table}[h]
\begin{center}
    \begin{tabular}{|clll|}
\toprule
    $t$ & $F_t(\y)$ & $\Delta_{F_t}(\y,\yh)$ & Name\\
 \midrule
    0&$\frac{1}{2}\Vert\y\Vert_2^2$ &
    $\frac{1}{2}\Vert\y-\yh\Vert_2^2$ & Euclidean\\[1mm]
    
    $\frac{1}{2}$&$\frac{1}{3}\,\sum_i(4\,y_i^{\frac{4}{3}}
    - 6\,y_i + 2)$ & $\sum_i (\frac{4}{3} y_i^{\frac{3}{2}}
     - 2 y_i \sqrt{\hat{y}_i} + \frac{3}{2}
     \hat{y}_i^{\frac{3}{2}})$ & \\[1mm]
    
    1&$\sum_i (y_i \log y_i - y_i + 1)$ & $\sum_i (y_i \log
    \frac{y_i}{\hat{y}_i} - y_i + \hat{y}_i)$ & KL-divergence\\[1mm]
    
     $\frac{3}{2}$&$\sum_i(-4\,y_i^{\frac{3}{2}} + 2\,y_i +
     2)$ & $2\sum_i \frac{(\sqrt{y_i} -
     \sqrt{\hat{y}_i})^2}{\sqrt{\hat{y}_i}}$ & Squared Xi on roots\\[1mm]
    
    $2$&$\sum_i (-\log y_i + y_i)$ & $\sum_i
    (\frac{y_i}{\hat{y}_i} - \log\frac{y_i}{\hat{y}_i} -
    1)$ & Itakura-Saito\\[1mm]
    
     $3$&$\frac{1}{2}\,\sum_i(-\frac{1}{y_i} + y_i - 2)$ &
     $\frac{1}{2}\sum_i (\frac{1}{y_i} - \frac{2}{\hat{y}_i}
     + \frac{y_i}{\hat{y}_i^2})$ & Inverse\\[1mm]
\bottomrule
\end{tabular}
\end{center}
    \caption{Some special cases of the tempered Bregman divergence.}\label{tab:special}
\end{table}

In the construction of the convex function family $F_t$ 
we used $F_t(x)=\int \log_t(x)$ exploiting the fact
that $\log_t(x)$ is strictly increasing. 
We can also define an alternative convex function family $\Ft_t$
by utilizing the convexity (respectively, concavity) of the $\log_t$ 
function for values of $t \geq 0$ (respectively, $t \leq 0$):
$$
\Ft_t(\y) = -\frac{1}{t}\sum_i (\log_t y_i - y_i + 1) = -\frac{1}{t\,(1-t)}\sum_i (y_i^{1-t} - y_i)\, .
$$
Note that $\ft_t(\y) \coloneqq \nabla\Ft(\y) =  \frac{1 - \y^{-t}}{t}$ and $\nabla^2 \Ft_t(\y) = \text{diag}(\y^{-(1 + t)})$, thus $\Ft_t$ is indeed a strictly convex function. 
The following proposition shows that the Bregman divergence induced by 
the original and the alternate convex function are related by a temperature shift:
\begin{proposition}
For the Bregman divergence induced by the convex function $\Ft_t$, we have
\[\forall \y,\,\yh\in \R_+^k:\;\;
\Delta_{\Ft_t}(\y, \yh)\;\; =\;\; 
\frac{1}{t}\,\sum_i (\log_t \hat{y}_i - \log_t y_i + (y_i - \hat{y}_i)\,\hat{y}_i^{-t})
\;\;=\;\; \Delta_{F_{t+1}}(\y, \yh)\, .
\]
\end{proposition}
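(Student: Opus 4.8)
The plan is to establish the two equalities in turn, both by short direct computation, with the second admitting a particularly clean shortcut.

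For the first equality I would simply unfold the definition of the Bregman divergence, $\Delta_{\Ft_t}(\y,\yh) = \Ft_t(\y) - \Ft_t(\yh) - (\y - \yh)\cdot \ft_t(\yh)$, using the given formula $\Ft_t(\y) = -\tfrac{1}{t}\sum_i(\log_t y_i - y_i + 1)$ together with its gradient $\ft_t(\yh) = \tfrac{1-\yh^{-t}}{t}$. The difference $\Ft_t(\y)-\Ft_t(\yh)$ contributes $\tfrac{1}{t}\sum_i(\log_t \hat{y}_i - \log_t y_i + y_i - \hat{y}_i)$, while the inner-product term contributes $\tfrac{1}{t}\sum_i(y_i-\hat{y}_i) - \tfrac{1}{t}\sum_i(y_i-\hat{y}_i)\hat{y}_i^{-t}$. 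The two copies of the linear sum $\tfrac{1}{t}\sum_i(y_i-\hat{y}_i)$ cancel, leaving exactly $\tfrac{1}{t}\sum_i(\log_t\hat{y}_i - \log_t y_i + (y_i-\hat{y}_i)\hat{y}_i^{-t})$, which is the claimed middle expression.

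For the second equality I would take the slick route and show that $\Ft_t$ and $F_{t+1}$ coincide (up to an affine function), so that the invariance of Bregman divergences under affine shifts noted in Section~\ref{sec:bregman} immediately gives $\Delta_{\Ft_t} = \Delta_{F_{t+1}}$. Concretely, substituting $t \mapsto t+1$ into $F_t(\y)=\sum_i(y_i\log_t y_i + \tfrac{1}{2-t}(1-y_i^{2-t}))$ and using $\log_{t+1}(x) = \tfrac{1}{-t}(x^{-t}-1)$ yields, after collecting the $y_i^{1-t}$ terms, $F_{t+1}(\y) = \sum_i(-\tfrac{1}{t(1-t)}y_i^{1-t} + \tfrac{1}{t}y_i + \tfrac{1}{1-t})$; expanding $\Ft_t$ via $\log_t(x) = \tfrac{1}{1-t}(x^{1-t}-1)$ produces the identical per-coordinate expression. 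Hence $\Ft_t = F_{t+1}$ term-by-term (in fact exactly, not merely modulo affine), and the two induced divergences agree.

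Alternatively, one may verify the second equality purely algebraically by expanding $\log_t$ in the middle expression, using $\log_t\hat{y}_i - \log_t y_i = \tfrac{1}{1-t}(\hat{y}_i^{1-t}-y_i^{1-t})$ and $(y_i-\hat{y}_i)\hat{y}_i^{-t} = y_i\hat{y}_i^{-t}-\hat{y}_i^{1-t}$, then matching against the second (``$\beta$-divergence'') form of $\Delta_{F_t}$ in~\eqref{eq:div} with $t$ replaced by $t+1$. I expect the only genuine obstacle to be careful coefficient bookkeeping, in particular correctly simplifying $\tfrac{1}{t}+\tfrac{1}{1-t} = \tfrac{1}{t(1-t)}$ and tracking the index shift $t\mapsto t+1$ in the exponents and prefactors of~\eqref{eq:div}; no conceptual difficulty arises. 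The affine-equivalence route is preferable since it sidesteps almost all of this arithmetic.
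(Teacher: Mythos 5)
Your proposal is correct. The paper states this proposition without proof, and your argument is exactly the natural verification: unfolding the Bregman definition yields the middle expression, and your key observation that $\Ft_t$ (in its defining form $-\tfrac{1}{t}\sum_i(\log_t y_i - y_i + 1)$) coincides term-by-term with $F_{t+1}$ — equivalently, that $\ft_t(\y) = \tfrac{1}{t}(\one-\y^{-t}) = \log_{t+1}\y = f_{t+1}(\y)$, so the two convex functions have identical gradient fields — immediately forces $\Delta_{\Ft_t} = \Delta_{F_{t+1}}$. One small caveat: the exact (rather than merely affine) equality $\Ft_t = F_{t+1}$ holds for that defining form, whereas the paper's alternative expression $-\tfrac{1}{t(1-t)}\sum_i(y_i^{1-t}-y_i)$ differs from it (and hence from $F_{t+1}$) by an affine term, so invoking the affine-invariance of Bregman divergences, as you do, is the robust way to state the conclusion.
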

The $\Ft_t$ function is also related to the negative Tsallis entropy over the 
probability measures $\y \in \Delta_+^k$ defined as
\[
    - H_t^{\text{\tiny Tsallis}}(\y) =
      \frac{1}{1-t}\big(1-\sum_i y_i^{t}\big)
    = - \sum_i y_i \log_t\frac{1}{y_i}\, .
\]
Note that $(-H_t^{\text{\tiny Tsallis}} - (1-t)\,\Ft_{1-t})$ is an affine function. 
Thus, the Bregman Divergence induced by $\Ft_t$,
and the one induced by
$-H_t^{\text{\tiny Tsallis}}$ are also equivalent up to a
scaling and a temperature shift. 
Thus, both functions $F_t$ and $\Ft_t$ can be viewed as some
generalized negative entropy functions. 
Note that the Bregman
divergence induced by $-H_t^{\text{\tiny Tsallis}}$ is
different from the Tsallis divergence over the simplex, defined as
\[
\Delta_t^{\text{\tiny Tsallis}}(\y, \yh) = -\sum_i y_i \log_t\frac{\hat{y}_i}{y_i} = \sum_i y_i^t\, (\log_t y_i - \hat{y}_i)\, .
\]

\section{Convexity of the Tempered Matching Loss}
The convexity of the loss function $\Delta_{F_t}\big(\y,\exp_t(\ah-\lambda_t(\ah)\big)$ with $t \geq 1$ for $\ah \in \R^\nclass$ immediately follows from the definition of the matching loss. A more subtle case occurs when $0 \leq t < 1$. Note that the range of the combined function $\log_t \circ \exp_t$ does not cover all $\R^k$ as the $\log_t$ function is bounded from below by $-\frac{1}{1-t}$. Therefore, $\textbf{range}(\log_t \circ \exp_t) = \{\a' \in \R^\nclass\mid  -\frac{1}{1-t}\leq \a'\}$. 

\begin{remark}
The normalization function $\lambda_t(\a)$ satisfies: $\lambda_t(\a + b\, \one) = \lambda_t(\a) + b\,\,\,\text{ for \,} b \in \R\, .$
\end{remark}
\begin{proof}
Note that
\[
\sum_i \exp_t((a_i + b) - \lambda_t(\a + b\, \one)) = \sum_i \exp_t\big(a_i - \underbrace{(\lambda_t(\a + b\, \one) - b)}_{ = \lambda_t(\a)}\big) = 1\,\,\,\, \text{ for \,\,\,} \forall \a \in \R^\nclass\, .
\]
The claim follows immediately.
\end{proof}

\begin{proposition}
The loss function $\Delta_{F_t}\big(\y,\exp_t(\ah-\lambda_t(\ah))$ for\, $0 \leq t < 1$\, is convex for
\[\ah \in \{\a' + \R\,\one \mid -\frac{1}{1-t}\leq \a'\}\,.\]
\end{proposition}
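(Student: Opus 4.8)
The plan is to show that, on the prescribed set, the matched-temperature loss coincides with a Bregman divergence that is \emph{globally} convex in its first argument, so that convexity follows from Proposition~\ref{lem:convexity} together with the shift-invariance recorded in the preceding remark. First I would note that the transfer function $\fs_t^* = \nabla\Fs_t^*$, and hence the loss, is invariant under $\ah \mapsto \ah + b\,\one$: since $\lambda_t(\ah + b\,\one) = \lambda_t(\ah) + b$, we get $\exp_t(\ah + b\,\one - \lambda_t(\ah+b\,\one)\,\one) = \exp_t(\ah - \lambda_t(\ah)\,\one)$. This is exactly the ``$+\,\R\,\one$'' freedom in the prescribed set, and it reduces the question to the normalized representative $\a' \coloneqq \ah - \lambda_t(\ah)\,\one$, which by hypothesis satisfies $\a' \geq -\tfrac{1}{1-t}\,\one$.

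The key step is that $\a' \geq -\tfrac{1}{1-t}\,\one$ lands in $\textbf{range}(\log_t\circ\exp_t)$, so that no clamping occurs: $\log_t(\exp_t(\a')) = \a'$, i.e.\ $\a'$ is a genuine inverse $(\fs_t^*)^{-1}(\fs_t^*(\ah))$. I would then apply the duality identity~\eqref{eq:dual-div} with $\yh = \fs_t^*(\ah)$ to write
\[
\Delta_{F_t}\big(\y,\fs_t^*(\ah)\big) = \Delta_{\Fs_t^*}\big(\log_t(\fs_t^*(\ah)),\,\log_t\y\big) = \Delta_{\Fs_t^*}\big(\a',\,\log_t\y\big) = \Delta_{\Fs_t^*}\big(\ah,\,\log_t\y\big),
\]
where the middle equality uses the no-clamping property and the last equality is the shift-invariance of $\Delta_{\Fs_t^*}$ in its first argument (which holds because $\Fs_t^*(\a+b\,\one)=\Fs_t^*(\a)+b$ and $\fs_t^*$ takes values in $S^k$). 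Since $\Delta_{\Fs_t^*}(\cdot,\log_t\y)$ is a Bregman divergence, it is convex in its first argument on all of its domain, so throughout the prescribed set the loss equals one fixed globally convex function of $\ah$.

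It remains to check that the prescribed set is itself convex, which is what turns ``equal to a convex function on this set'' into convexity of the loss. I would deduce this from the convexity of $\exp_t$ for $t<1$: given two points with normalized representatives $\a'_1,\a'_2 \geq -\tfrac{1}{1-t}\,\one$, a componentwise convex combination $\w$ satisfies $\sum_i\exp_t(w_i) \leq \theta\sum_i\exp_t(a'_{1,i}) + (1-\theta)\sum_i\exp_t(a'_{2,i}) = 1$, which forces $\lambda_t(\w)\le 0$; together with $\w \geq -\tfrac{1}{1-t}\,\one$ this yields $\w - \lambda_t(\w)\,\one \geq -\tfrac{1}{1-t}\,\one$, so the combination stays in the set. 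The hard part is precisely this clamping boundary: outside the prescribed set some coordinate of $\a'$ drops below $-\tfrac{1}{1-t}$, the composition $\log_t\circ\exp_t$ introduces a concave kink, the identity with $\Delta_{\Fs_t^*}(\cdot,\log_t\y)$ fails, and convexity is genuinely lost — as one already sees for $k=2$, $t=0$, $\y=(1,0)$, where the loss viewed as a function of $a_1-a_2$ has a downward change of slope at the clamp threshold. Correctly delimiting the no-clamp region and identifying it with $\textbf{range}(\log_t\circ\exp_t)$ is therefore the crux of the argument.
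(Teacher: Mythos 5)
Your proof is correct and takes essentially the same route as the paper's: identify the loss, on the no-clamp region where $\log_t\circ\exp_t$ acts as the identity, with the dual Bregman divergence $\Delta_{\Fs^*_t}\big(\ah,(\fs^*_t)^{-1}(\y)\big)$ (you invoke the duality identity~\eqref{eq:dual-div}, while the paper re-derives it by a chain of equalities), then use $\Fs_t^*(\a+b\,\one)=\Fs_t^*(\a)+b$ to extend along the $\R\,\one$ directions and conclude from convexity of a Bregman divergence in its first argument. The only place you go beyond the paper is your verification, via convexity of $\exp_t$ for $t<1$, that the prescribed set is itself convex---a step the paper's proof silently assumes but which is genuinely needed for ``agrees with a globally convex function on this set'' to yield convexity of the restriction---and your clamp-boundary counterexample correctly shows the restriction cannot be dropped; both additions are sound.
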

\begin{proof} 
Using the definition of the dual function $\Fs^*$ and its derivative $\fs^*$, we can write
\begin{align*}
\Delta_{F_t}(\y, \yh) &= F_t(\y) - F_t(\yh) - (\y - \yh)\cdot f_t (\yh) && \big(\yh = \exp_t(\ah - \lambda_t(\ah)\, \one)\big)\\
& = F_t(\y) - F_t(\yh) - 
    (\y - \yh) \cdot \log_t \circ \exp_t(\ah - \lambda \one)\\
& = F_t(\y) - F_t(\yh) - 
    (\y - \yh) \cdot (\ah - \lambda_t(\ah)\, \one) && 
    \big((\y - \yh) \cdot \mathbf{1} = 1 - 1 = 0\big)\\
    & = \underbrace{F_t(\y) - \y \cdot
    (\fs^*_t)^{-1}(\y)}_{-\Fs^*_t((\fs^*_t)^{-1}(\y))}\, +\, \y \cdot (\fs^*_t)^{-1}(\y)\,
    \underbrace{-\, F_t(\yh) + \yh\cdot\ah}_{\Fs^*_t(\ah)}\, - \,\y\cdot\ah\\
    & = \Fs^*_t(\ah) -\Fs_t^*((\fs^*_t)^{-1}(\y)) - (\ah - (\fs^*_t)^{-1}(\y)) \cdot \y\\
    & = \Delta_{\Fs^*_t}(\ah,(\fs^*_t)^{-1}(\y))\, .
\end{align*}
Note that the transition from the second line to the third line requires that the assumption $-\frac{1}{1-t} \leq \ah$ holds. The dual function $\Fs_t^*$ satisfies
\[
\Fs_t^*(\a + b\, \one) = \lambda_t(\a + b\, \one) + \frac{1}{2-t}\, \sum_i \exp_t\big((a_i + b) - \lambda_t(\a + b\, \one)\big)^{2-t} = \Fs_t^*(\a) + b\, .
\]
Additionally,
\[
\Delta_{\Fs^*_t}(\ah + b\, \one,(\fs^*_t)^{-1}(\y)) = \Fs^*_t(\ah + b\, \one) -\Fs_t^*((\fs^*_t)^{-1}(\y)) - (\ah + b\,\one - (\fs^*_t)^{-1}(\y)) \cdot \y  = \Delta_{\Fs^*_t}(\ah,(\fs^*_t)^{-1}(\y))\, .
\]
The claim follows by considering the range of $\log_t \circ \exp_t$ and the invariance of the Bregman divergence induced by $\Fs_t^*$ along $\R\, \one$.
\end{proof}

\section{Derivatives of Lagrangian and the Bi-tempered Matching Loss}
\label{a:deriv}
The gradient of $\lambda_t(\a)$ w.r.t. $\a$ 
can be calculated by taking the partial derivative of both sides 
of the equality $1= \sum_j \exp_t(a_j - \lambda_t(\a))$  w.r.t. $a_i$:
\begin{align}
\label{eq:lambda_der}
    0&
    = \sum_j \exp_t(a_j - \lambda_t(\a)\big)^t 
       \;\big(\delta_{ij}- \frac{\partial \lambda_t(\a)}{\partial a_i}\big)
    \nonumber\\
    &= \exp_t\big(a_i - \lambda_t(\a)\big)^t 
- \frac{\partial \lambda_t(\a)}{\partial a_i}\,
    \sum_j \exp_t(a_j - \lambda_t(\a)\big)^t,\;
\text{ where $\delta_{ii}=1$ and $\delta_{ij}=0$ for $i\ne j$\,.}
\end{align}
Therefore $\frac{\partial \lambda_t(\a)}{\partial a_i} =
\frac{\exp_t\big(a_i - \lambda_t(\a)\big)^t}{Z_t},$
where $Z_t=\sum_j \exp_t(a_j - \lambda_t(\a))^t$.
We conclude that $\frac{\partial \lambda_t(\a)}{\partial a_i}$
is the ``$t$-escort distribution'' of the distribution
$\frac{\exp (a_i - \lambda_t(\a))}{Z_1}$.

Similarly, the second derivative of $\lambda_t(\a)$ 
can be calculated by repeating the derivation
on~\eqref{eq:lambda_der}:
\[
\frac{\partial^2 \lambda_t(\a)}{\partial a_i\partial a_j} 
= \frac{1}{Z_t}\,\sum_{j'} t\,\exp_t\big(a_{j'} - \lambda_t(\a)\big)^{2t-1}\, \big(\delta_{ij'} 
- \frac{\partial \lambda_t(\a)}{\partial a_i}\big)\big(\delta_{jj'} 
- \frac{\partial \lambda_t(\a)}{\partial a_j}\big).
\]
Although not immediately obvious from the second
derivative, it is easy to show that $\lambda_t(\a)$ is in fact a convex $\a$.
Also the derivative of the loss $\loss(\ah|\, \y)$ w.r.t. $\hat{a}_i$
(expressed in terms of $\y$ and $\yh=\exp_{t_2}(\ah - \lambda_{t_2}(\ah))$) becomes
\begin{align*}	 \frac{\partial \loss}{\partial \hat{a}_i} & = \sum_j \frac{\partial}{\partial \hat{y}_j}\big(y_j \log_{t_1} y_j - y_j \log_{t_1} \hat{y}_j - \frac{1}{2-t_1}\, y_j^{2-t_1} + \frac{1}{2-t_1}\, \hat{y}_j^{2-t_1}\big)\, \frac{\partial \hat{y}_j}{\partial \hat{a}_i}\\
& = \sum_j (\hat{y}_{j} - y_{j})\;
\hat{y}_{j}^{t_2 - t_1} \;
\Big(\delta_{ij} - \frac{\hat{y}_{i}^{t_2}}{\sum_{j'} \hat{y}_{j'}^{t_2}}\Big)\, .
\end{align*}

\section{Proof of Bayes-risk Consistency}
The conditional risk of the multiclass loss $\bm{l}(\ah)$ with $l_i \coloneqq \lb(\ah|\,y=i),\, i\in[\nclass]$ is defined as
\[
R(\bm{\eta}, \bm{l}(\ah)) = \sum_i \eta_i\, l_i\, ,
\]
where $\eta_i \coloneqq \Puk(y = i|\, \x)$. 
\begin{definition}[Bayes-risk Consistency]
A Bayes-risk consistent loss for multiclass classification is the class of loss functions $\ell$ for which $\ah^{\star}$, the minimizer of $R(\bm{\eta}, \bm{l}(\ah))$, satisfies 
\[
\arg\min_i\, \lb(\ah^\star|\,y=i) \subseteq \argmax_i\, \eta_i\, .
\]
\end{definition}
\begin{proof}[\textbf{Proof of Proposition~\ref{prop:bayes}.}]
For the bi-tempered loss, we have
\[
l_i = -\log_{t_1} \exp_{t_2}(\hat{a}_i - \lambda_{t_2}(\ah)) + \frac{1}{2-t_1}\, \sum_j \exp_{t_2}(\hat{a}_j - \lambda_{t_2}(\ah))^{2-t_1}\, .
\]
Note that the second term is repeated for all classes $i\in[\nclass]$. Also,
\[
R(\bm{\eta}, \bm{l}(\ah)) = -\sum_i \eta_i\, \log_{t_1} \exp_{t_2}(\hat{a}_i - \lambda_{t_2}(\ah)) + \frac{1}{2-t_1}\, \sum_i \exp_{t_2}(\hat{a}_i - \lambda_{t_2}(\ah))^{2-t_1}\, .
\]
The minimizer of $R(\bm{\eta}, \bm{l}(\ah))$ satisfies
\[
\eta_i =  \exp_{t_2}(\hat{a}^\star_i - \lambda_{t_2}(\ah^\star))\, .
\]
Since $-\log_{t_1}$ is a monotonically decreasing function for $0 \leq t_1 < 1$, we have
\[
\arg\min_i\, \lb(\ah^\star|\,y=i) = \arg\min_i -\log_{t_1}\exp_{t_2}(\hat{a}^\star_i - \lambda_{t_2}(\ah^\star)) = \arg\max_i \hat{a}^\star_i \subseteq \arg\max_i \eta_i\, .
\]
\end{proof}
\end{document}